\DeclareMathOperator*{\argmin}{argmin}
\newtheorem{theorem}{Theorem}
\newtheorem{lemma}{Lemma}
\begin{document}

\title{\textbf{Cost-Aware Learning for Improved Identifiability with Multiple Experiments}}
\author{
	\small Longyun Guo\\
	\small Biochemistry,Purdue University\\
	\small West Lafayette, IN 47907, USA\\
	\small guo165@purdue.edu
	\and
	\small Jean Honorio\\
	\small Computer Science, Purdue University\\
	\small West Lafayette, IN 47907, USA\\
	\small jhonorio@purdue.edu
	\and
	\small John Morgan\\
	\small Chemical Engineering, Purdue University\\
	\small West Lafayette, IN 47907, USA\\
	\small jamorgan@purdue.edu
}
\date{}
\maketitle
\date{}
\maketitle

\begin{abstract}
We analyze the sample complexity of learning from multiple experiments where the experimenter has a total budget for obtaining samples. In this problem, the learner should choose a hypothesis that performs well with respect to multiple experiments, and their related data distributions. Each collected sample is associated with a cost which depends on the particular experiments. In our setup, a learner performs $m$ experiments, while incurring a total cost $C$. We first show that learning from multiple experiments allows to improve identifiability. Additionally, by using a Rademacher complexity approach, we show that the gap between the training and generalization error is $O(C^{-\frac{1}{2}})$. We also provide some examples for linear prediction, two-layer neural networks and kernel methods.
\end{abstract}

\section{Introduction}
\label{Introduction}

Several questions in machine learning can be formulated as inferring the true hypothesis given a finite number of samples from an unknown arbitrary distribution. Different hypotheses can be evaluated with their expected losses, which are defined as the expectation of the corresponding empirical losses derived from samples. While the true hypothesis is usually associated with the minimal expected loss, in most cases the expected loss cannot be accessed due to lack of information for the underlying sampling population. Minimization of the empirical loss is thus applied to infer the true hypothesis. It is then important to evaluate the closeness between the empirical minimizer and the true hypothesis, in terms of the expected loss, especially how it is affected by the number of collected samples. 

Various techniques have been developed to study the difference of the expected loss between the empirical minimizer and the true hypothesis, by learning from one data distribution. For instance, PAC-Bayes (\cite{mcallester98}), VC dimension (\cite{vapnik71}), covering numbers (\cite{zhang02}), fat-shattering dimension (\cite{bartlett98}), as well as Rademacher and Gaussian complexities (\cite{koltchinskii00,bartlett02model,bartlett02}) to name a few.

Previous works have studied the problem of learning from one data distribution. Here, we study learning from multiple experiments, which is a more realistic problem that fits the nowadays scientific practice, in contrast with learning from just one data distribution. This is mainly due to the issue of identifiability. That is, multiple hypotheses can potentially have the same expected loss, for the same experiment, which makes it difficult to discern which hypothesis to prefer. On the other hand, one hypothesis can stand out from the rest by performing multiple experiments with different data distributions. Here we assume a total cost budget $C$ and $m$ experiments to perform, and analyze the number of samples needed for each of the $m$ experiments, so that the gap between the training and generalization error is minimized.

In this paper, we develop a general framework for learning from multiple experiments. We first show that multiple experiments improve identifiability, by reducing the set of optimal hypotheses. Additionally, we study the sample complexity of the problem. With the assumption that the Rademacher complexity of each experiment is on the order of $O(n^{-\frac{1}{2}})$, we show that the uniform convergence is at a rate of $O(C^{-\frac{1}{2}})$, where $C$ is the total cost budget to be distributed across the $m$ different experiments. We also provide some examples in linear prediction, two-layer neural networks, and kernel methods.

\section{Preliminaries}
We assume that there is a true hypothesis $h^*\in\mathcal{F}$, where $\mathcal{F}$ is the hypothesis set. Additionally, we assume that there is a finite experiment set $\mathfrak{D} = \{ \mathcal{D}_1, \mathcal{D}_2, \dots, \mathcal{D}_m \}$. Each element $\mathcal{D}_j\in\mathfrak{D}$ is a data distribution where samples are drawn from.

We further assume that $\mathfrak{D}$ comes with a per-sample cost set $\mathcal{C}=\{ c_1,c_2,\dots, c_m \}$. This assumption comes from the fact that in practice, different experiments require different amount of resources. All these investments are summarized as experimental costs. Furthermore, it is reasonable to assume that the total cost for one experiment is proportional to the number of samples used in the experiment.

One experiment is said to be performed if some samples are collected from the corresponding data distribution. In that case, a dataset $S_j=\{z_{j,1},\dots,z_{j,n_j}\}$ with $n_j$ samples is obtained by drawing from $\mathcal{D}_j\in\mathfrak{D}$. If one is constrained with a total cost $C$ to perform $m$ experiments, then the numbers of samples for $m$ experiment are constrained in the following fashion:

\begin{equation}
\label{cost_constraint}
\sum_{j=1}^{m} c_j n_j \le C
\end{equation}

Given the expected loss $\mathbb{E}_{z_{j}\sim\mathcal{D}_j}[h(z_{j})]$ for $\forall h \in \mathcal{F}$ and $\forall\mathcal{D}_j\in\mathfrak{D}$ within the range of $[0, 1]$, a combined expected loss over $m$ experiments can then be defined:

\begin{equation}
\label{combined_expected_loss}
\mathbb{E}_{\mathcal{D}_1^m}[h] = \frac{1}{m} \sum_{j=1}^{m} \mathbb{E}_{z_j\sim\mathcal{D}_j}[h(z_j)]
\end{equation}

The true hypothesis $h^*$ is assumed to satisfy:

\begin{equation}
\label{h_start_constraint}
(\forall \ j)\ h^* \in \mathcal{H}_j^* \equiv \argmin_{%
	\substack{%
		h\in\mathcal{F}
	}
}
\mathbb{E}_{z_{j}\sim\mathcal{D}_j}[h(z_{j})]
\end{equation}

Note that for any given data distribution $\mathcal{D}_j$ there could exist other hypotheses with the same expected loss $\mathbb{E}_{z_j\sim\mathcal{D}_j}[h(z_j)]$, making it impossible to discern between them by only learning from one data distribution.

To evaluate hypotheses with a finite number of samples, a combined empirical loss over $m$ experiments is defined as:

\begin{equation}
\hat{\mathbb{E}}_{S_1^m}[h]=\frac{1}{m} \sum_{j=1}^{m}\frac{1}{n_j}\sum_{i=1}^{n_j}h(z_{j,i})
\end{equation}

Thus, the empirical hypothesis learned from $m$
experiments satisfies the following condition:

\begin{equation}
\hat{h} \in \argmin_{%
	\substack{%
		h\in\mathcal{F}
	}
}
\hat{\mathbb{E}}_{S_1^m}[h]
\end{equation}

To measure the difference between any two hypotheses, a divergence function $d_m: \mathcal{F}\times\mathcal{F}\to[-1,1]$ is defined as:

\begin{equation}
(\forall h,h'\in\mathcal{F})\ d_m(h,h')=\mathbb{E}_{\mathcal{D}_{1}^{m}}[h]-
\mathbb{E}_{\mathcal{D}_{1}^{m}}[h']
\end{equation}

The empirical hypothesis $\hat{h}$ is said to recover the true hypothesis $h^*$ if we can show that $d_m(\hat{h},h^*)\to0$ as $C\to\infty$. In this paper, we identify the dependence of $d_m(\hat{h},h^*)$ with respect to the number of experiments $m$, as well as the total cost $C$.

\section{Results}

First, we show that learning from multiple experiments can improve hypothesis identifiability when compared to learning from single experiments, which justifies our learning problem.

\begin{theorem}
	\label{multi_exp_theorem}
	Let 
	\begin{equation*}
	\mathcal{H}^* \equiv \argmin_{%
		\substack{%
			h\in\mathcal{F}
		}
	}
	\mathbb{E}_{\mathcal{D}_1^m}[h]
	\end{equation*}
	If $(\forall \ j)\ h^* \in \mathcal{H}_j^*$, then the following holds:\\
	\begin{equation*}
	h^* \in \mathcal{H}^* = \mathcal{H}_1^* \cap \mathcal{H}_2^* \cap \dots \cap \mathcal{H}_m^*.
	\end{equation*}
\end{theorem}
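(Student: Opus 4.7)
The plan is to split the claim into two inclusions and handle the statement $h^*\in\mathcal{H}^*$ as a byproduct of the easy direction. Throughout, I would exploit the fact that $\mathbb{E}_{\mathcal{D}_1^m}[h]$ is just the uniform average of the $m$ per-experiment expected losses $L_j(h) := \mathbb{E}_{z_j\sim\mathcal{D}_j}[h(z_j)]$, so minimizing the average is closely tied to minimizing each summand.

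First I would prove the inclusion $\mathcal{H}_1^*\cap\cdots\cap\mathcal{H}_m^*\subseteq\mathcal{H}^*$. Pick any $h\in\bigcap_j\mathcal{H}_j^*$. For every $h'\in\mathcal{F}$ and every $j$, the definition of $\mathcal{H}_j^*$ gives $L_j(h)\le L_j(h')$. Averaging the $m$ inequalities over $j$ yields $\mathbb{E}_{\mathcal{D}_1^m}[h]\le\mathbb{E}_{\mathcal{D}_1^m}[h']$, so $h\in\mathcal{H}^*$. Applying this with $h=h^*$, which by hypothesis lies in every $\mathcal{H}_j^*$, immediately delivers $h^*\in\mathcal{H}^*$.

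The main obstacle, and the only step that uses more than a monotonicity of averaging, is the reverse inclusion $\mathcal{H}^*\subseteq\bigcap_j\mathcal{H}_j^*$. Here I would argue as follows. Fix any $h\in\mathcal{H}^*$. Since $h^*\in\mathcal{H}^*$ from the previous step, we have $\mathbb{E}_{\mathcal{D}_1^m}[h]=\mathbb{E}_{\mathcal{D}_1^m}[h^*]$, that is,
\begin{equation*}
\sum_{j=1}^{m}\bigl(L_j(h)-L_j(h^*)\bigr)=0.
\end{equation*}
On the other hand, for each $j$ the assumption $h^*\in\mathcal{H}_j^*$ gives $L_j(h)-L_j(h^*)\ge 0$. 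A sum of nonnegative reals that equals zero forces each summand to vanish, so $L_j(h)=L_j(h^*)$ for every $j$, which means $h\in\mathcal{H}_j^*$ for every $j$.

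Combining the two inclusions gives $\mathcal{H}^*=\mathcal{H}_1^*\cap\cdots\cap\mathcal{H}_m^*$, and the membership $h^*\in\mathcal{H}^*$ established in the first paragraph completes the statement. I expect the write-up to be short; the only subtle point is the ``nonnegative terms summing to zero'' step in the second direction, which is what actually uses the hypothesis that $h^*$ lies in every $\mathcal{H}_j^*$ rather than merely in $\mathcal{H}^*$.
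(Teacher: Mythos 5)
Your proposal is correct and follows essentially the same route as the paper: the forward inclusion by averaging the per-experiment inequalities, and the reverse inclusion by observing that the per-experiment excess losses $L_j(h)-L_j(h^*)$ are nonnegative and must all vanish when their sum does. The only cosmetic difference is that the paper phrases this last step as a proof by contradiction (a single strictly suboptimal coordinate forces a strictly larger average), which is just the contrapositive of your ``nonnegative terms summing to zero'' argument.
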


(Detailed proofs can be found in Appendix~\ref{sec:multi_exp_theorem_proof}.)

Note that $\mathcal{H}^*$ is the intersection of $m$ sets. Thus as $m$ increases, the size of $\mathcal{H}^*$ decreases. The fact that the size of $\mathcal{H}^*$ decreases improves the identifiability of the true hypothesis.

In what follows, we concentrate on the sample complexity of learning from multiple experiments. We provide several theorems in order to upper-bound $d_m(\hat{h},h^*)$ with respect to the number of experiments $m$, as well as the total cost $C$. In order to estimate $d_m(\hat{h},h^*)$, we make use of the empirical Rademacher complexity of the hypothesis class $\mathcal{F}$ with respect to the datasets $S_j$ of $n_j$ samples, defined as:

\begin{equation}
\hat{\Re}_{S_j}(\mathcal{F})=\mathbb{E}_{\sigma}\bigg[\sup\limits_{h \in \mathcal{F}} \bigg(\frac{1}{n_j}\sum_{i=1}^{n_j}\sigma_i h(z_{j,i})\bigg)\bigg]
\end{equation}

\noindent where $\sigma=\{\sigma_1, \dots \sigma_{n_j}\}$ are $n_j$ independent Rademacher random variables, which are uniform $\{\pm1\}$-valued. The Rademacher complexity of the hypothesis class $\mathcal{F}$ for $n_j$ samples is defined as:

\begin{equation}
\Re_{n_j}(\mathcal{F})=\mathbb{E}_{S_j\sim\mathcal{D}_j^{n_j}}[\hat{\Re}_{S_j}(\mathcal{F})]
\end{equation}

In addition, two functions describing the maximal difference between $\mathbb{E}_{\mathcal{D}_{1}^{m}}[h]$ and $\hat{\mathbb{E}}_{S_1^m}[h]$ over $\mathcal{F}$ are defined:

\begin{equation}
\varphi(S)=\sup\limits_{h \in \mathcal{F}}\bigg(\mathbb{E}_{\mathcal{D}_{1}^{m}}[h]-\hat{\mathbb{E}}_{S_1^m}[h]\bigg)
\end{equation}

\begin{equation}
\varphi'(S)=\sup\limits_{h \in \mathcal{F}}\bigg(\hat{\mathbb{E}}_{S_1^m}[h]-\mathbb{E}_{\mathcal{D}_{1}^{m}}[h]\bigg)
\end{equation}

The following two lemmas are introduced to help bounding $d_m(\hat{h},h^*)$.

\begin{lemma}
	\label{lemma1}
	The following holds:
	\begin{align*}
	\mathbb{P}\bigg[&f(z_{1,1} \dots z_{j,i} \dots z_{m,n_m})-\mathbb{E}[f(z_{1,1} \dots z_{j,i} \dots z_{m,n_m})] \ge\epsilon\bigg]\le e^{\frac{-2 m^2 \epsilon^2}{\sum_{j=1}^{m} \frac{1}{n_j}}}
	\end{align*}
	for $f(S)=\frac{1}{m} \sum_{j=1}^m \hat{\Re}_{S_j}(\mathcal{F})$, $f(S)=\varphi(S)$ or $f(S)=\varphi'(S)$.
\end{lemma}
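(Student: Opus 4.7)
The plan is to recognize this as a direct application of McDiarmid's bounded differences inequality, applied separately to each of the three functionals. McDiarmid states that if $f$ of independent inputs $z_{j,i}$ satisfies $|f - f'|\le c_{j,i}$ whenever the argument $z_{j,i}$ alone is replaced, then $\mathbb{P}[f - \mathbb{E} f \ge \epsilon] \le \exp(-2\epsilon^2/\sum_{j,i} c_{j,i}^2)$. Matching this with the target tail $\exp(-2m^2\epsilon^2/\sum_j n_j^{-1})$ suggests I should aim for the uniform bound $c_{j,i} = 1/(m n_j)$; indeed $\sum_{j,i}c_{j,i}^2 = \sum_j n_j\cdot m^{-2}n_j^{-2} = m^{-2}\sum_j n_j^{-1}$, which gives exactly the claimed exponent.

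The bulk of the work is verifying the $1/(m n_j)$ bounded-differences constant for each of the three functionals. For $\varphi(S) = \sup_{h\in\mathcal{F}}(\mathbb{E}_{\mathcal{D}_1^m}[h] - \hat{\mathbb{E}}_{S_1^m}[h])$, replacing $z_{j,i}$ by $z'_{j,i}$ leaves $\mathbb{E}_{\mathcal{D}_1^m}[h]$ untouched and changes $\hat{\mathbb{E}}_{S_1^m}[h]$ by $\frac{1}{m n_j}(h(z_{j,i})-h(z'_{j,i}))$; since $h\in[0,1]$ this perturbation is bounded by $1/(m n_j)$, and the standard $|\sup A - \sup B|\le \sup |A-B|$ argument transfers the bound to $\varphi$. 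The same argument works verbatim for $\varphi'(S)$. For $f(S)=\frac{1}{m}\sum_j \hat{\Re}_{S_j}(\mathcal{F})$, only the $j$-th summand $\hat{\Re}_{S_j}(\mathcal{F}) = \mathbb{E}_\sigma[\sup_h \tfrac{1}{n_j}\sum_i \sigma_i h(z_{j,i})]$ changes when $z_{j,i}$ is perturbed; inside the $\mathbb{E}_\sigma$, the sup-difference trick bounds the change by $\tfrac{1}{n_j}\sup_h|h(z_{j,i})-h(z'_{j,i})|\le 1/n_j$, which after the $1/m$ prefactor yields $1/(m n_j)$ as desired.

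Once the three bounded-differences constants are established, I simply invoke McDiarmid with the computation above to obtain the stated tail bound. The main conceptual step is the sup-difference argument (standard) combined with tracking the $1/m$ and $1/n_j$ normalizations; the main possible pitfall is losing a factor of two by writing $|h(z)-h(z')|\le 2$ instead of $\le 1$ under the $[0,1]$ assumption, so I will be careful that the hypothesis range is exploited to get $1$ rather than $2$, which is exactly what the target constant $2m^2$ in the exponent requires.
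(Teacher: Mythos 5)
Your proposal is correct and follows exactly the same route as the paper: establish the bounded-differences constant $\frac{1}{m n_j}$ for each of the three functionals (using the $[0,1]$ range of $h$ and the $\sup$-difference argument) and invoke McDiarmid's inequality, with $\sum_{j,i} c_{j,i}^2 = \frac{1}{m^2}\sum_j \frac{1}{n_j}$ yielding the stated exponent. Your write-up is in fact more detailed than the paper's, which merely asserts the bounded-differences condition without verifying it for each functional.
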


\begin{proof}
	All the above definitions of $f(S)$ satisfy the following condition, as both expected losses and empirical losses for all data distributions are bounded within $[0, 1]$:
	
	$(\forall i, j, \forall z_{j,i}, \tilde{z}_{j,i} \sim \mathcal{D}_j)|f(z_{1,1} \dots z_{j,i} \dots z_{m,n_m})-f(z_{1,1} \dots \tilde{z}_{j,i} \dots z_{m,n_m})|\le\frac{1}{m n_j}$
	
	According to McDiarmid's inequality \cite{mcdiarmid89}, we prove our claim.
\end{proof}

\begin{lemma}
	\label{lemma2}
	The following holds:
	\begin{equation*}
	\mathbb{E}_{S_1^m}[\varphi(S)]\le \frac{2}{m} \sum_{j=1}^{m} \Re_{n_j}(\mathcal{F})
	\end{equation*}
	and 
	\begin{equation*} 
	\mathbb{E}_{S_1^m}[\varphi'(S)]\le \frac{2}{m} \sum_{j=1}^{m} \Re_{n_j}(\mathcal{F}).
	\end{equation*}
\end{lemma}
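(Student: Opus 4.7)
The plan is to adapt the classical symmetrization argument behind Rademacher complexity bounds to the multi-experiment setting. I will focus on $\mathbb{E}_{S_1^m}[\varphi(S)]$; the bound for $\varphi'(S)$ follows by an identical argument, since the Rademacher distribution is symmetric and swapping the roles of $S$ and $S'$ in the symmetrization step merely flips the overall sign.

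First, I would introduce ghost samples $S'_j = \{z'_{j,1},\dots,z'_{j,n_j}\}$, drawn independently from $\mathcal{D}_j$ and independent of $S_j$. Writing $\mathbb{E}_{z_j\sim\mathcal{D}_j}[h(z_j)] = \mathbb{E}_{S'_j}\!\left[\frac{1}{n_j}\sum_i h(z'_{j,i})\right]$ and using Jensen's inequality to pull the expectation over $S'$ past the supremum yields
\begin{equation*}
\mathbb{E}_{S_1^m}[\varphi(S)] \le \mathbb{E}_{S_1^m,\,{S'}_1^m}\!\left[\sup_{h\in\mathcal{F}} \frac{1}{m}\sum_{j=1}^{m}\frac{1}{n_j}\sum_{i=1}^{n_j}\bigl(h(z'_{j,i}) - h(z_{j,i})\bigr)\right].
\end{equation*}

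Second, I would symmetrize. For each pair $(j,i)$ the variables $z_{j,i}$ and $z'_{j,i}$ are i.i.d.\ from $\mathcal{D}_j$, so their joint law is invariant under swapping; this allows me to insert an independent Rademacher variable $\sigma_{j,i}$ in front of each difference $h(z'_{j,i}) - h(z_{j,i})$ without changing the expectation. Splitting the supremum of the sum via subadditivity and using that $-\sigma_{j,i}$ and $\sigma_{j,i}$ are equidistributed collapses the two resulting terms into twice a single Rademacher-type expectation:
\begin{equation*}
\mathbb{E}_{S_1^m}[\varphi(S)] \le 2\,\mathbb{E}_{\sigma,\,S_1^m}\!\left[\sup_{h\in\mathcal{F}} \frac{1}{m}\sum_{j=1}^{m}\frac{1}{n_j}\sum_{i=1}^{n_j}\sigma_{j,i}\,h(z_{j,i})\right].
\end{equation*}

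The last step is to decouple the experiments: since the only coupling across experiments is through the shared hypothesis $h$, the supremum of the outer sum is bounded by the sum of the per-experiment suprema; exchanging this with the expectation identifies each inner expectation as $\Re_{n_j}(\mathcal{F})$ and gives the claimed bound. The step I expect to require the most care is the joint symmetrization across both indices $j$ and $i$: one must verify that the Rademacher variables attached to different experiments can be taken mutually independent. This is legitimate because the ghost samples are themselves drawn independently across the $m$ experiments, but it is essentially the only place where the multi-experiment structure, as opposed to a single-distribution setup, actually enters the argument.
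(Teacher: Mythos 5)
Your proposal is correct and follows essentially the same symmetrization argument as the paper's proof: ghost samples, Jensen's inequality to pull the expectation past the supremum, coordinate-wise insertion of Rademacher variables, and decoupling of the experiments via subadditivity of the supremum. The only cosmetic difference is the order of the last two steps (you split the symmetrized expression into two Rademacher terms before decoupling over $j$, while the paper decouples first), which does not change the argument.
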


(Detailed proofs can be found in Appendix~\ref{sec:lemma2_proof}.)

Given the above lemmas, we provide our bound for $d_m(\hat{h},h^*)$ with respect to the Rademacher complexity in the following theorem, which follows from Lemma~\ref{lemma1} and Lemma~\ref{lemma2}, as well as union bound arguments.

\begin{theorem}
	\label{th_expected_R}
	The divergence over $m$ experiments is bounded as follows:
	
	$d_m(\hat{h},h^*)\le \frac{4}{m} \sum_{j=1}^{m} \Re_{n_j}(\mathcal{F})
	+\frac{1}{m} \sqrt{2 \log{\frac{2}{\delta}} \sum_{j=1}^{m}\frac{1}{n_j}}$ with a probability at least $1-\delta\ (\delta\in(0,1))$.
\end{theorem}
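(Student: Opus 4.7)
The plan is to reduce the divergence $d_m(\hat h,h^*)$ to the uniform-convergence functionals $\varphi(S)$ and $\varphi'(S)$, and then control those via McDiarmid concentration (Lemma~\ref{lemma1}) together with their expectation bounds (Lemma~\ref{lemma2}).

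First, I would insert and subtract the empirical losses to write
\begin{equation*}
d_m(\hat h,h^*) = \bigl(\mathbb{E}_{\mathcal{D}_1^m}[\hat h]-\hat{\mathbb{E}}_{S_1^m}[\hat h]\bigr) + \bigl(\hat{\mathbb{E}}_{S_1^m}[\hat h]-\hat{\mathbb{E}}_{S_1^m}[h^*]\bigr) + \bigl(\hat{\mathbb{E}}_{S_1^m}[h^*]-\mathbb{E}_{\mathcal{D}_1^m}[h^*]\bigr).
\end{equation*}
The middle term is $\le 0$ because $\hat h$ is the empirical minimizer, while the other two are bounded by the suprema defining $\varphi(S)$ and $\varphi'(S)$. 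Hence $d_m(\hat h,h^*) \le \varphi(S) + \varphi'(S)$, which is the standard symmetric decomposition but phrased for the combined multi-experiment loss.

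Second, I would invoke Lemma~\ref{lemma1} separately on $\varphi(S)$ and $\varphi'(S)$, each with deviation level $\delta/2$. Solving $e^{-2m^2\epsilon^2/\sum_j 1/n_j}=\delta/2$ yields $\epsilon = \tfrac{1}{m}\sqrt{\tfrac{1}{2}\log(2/\delta)\sum_{j=1}^m 1/n_j}$. A union bound over the two events then gives, with probability at least $1-\delta$,
\begin{equation*}
\varphi(S) + \varphi'(S) \le \mathbb{E}_{S_1^m}[\varphi(S)] + \mathbb{E}_{S_1^m}[\varphi'(S)] + \frac{1}{m}\sqrt{2\log(2/\delta)\sum_{j=1}^m \frac{1}{n_j}},
\end{equation*}
where the factor of $2$ inside the square root arises from combining the two identical deviation terms.

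Finally, applying Lemma~\ref{lemma2} to each expectation yields $\mathbb{E}[\varphi(S)] + \mathbb{E}[\varphi'(S)] \le \tfrac{4}{m}\sum_{j=1}^m \Re_{n_j}(\mathcal{F})$, and substituting back delivers the claimed bound. Because the heavy lifting is already encapsulated in Lemmas~\ref{lemma1} and \ref{lemma2}, no step is truly an obstacle; the only place to be careful is bookkeeping the constants when merging the two McDiarmid tail bounds so that the union-bound factor of $2$ lands inside the square root rather than outside, matching the $\tfrac{1}{m}\sqrt{2\log(2/\delta)\sum_j 1/n_j}$ term in the statement.
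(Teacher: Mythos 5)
Your proposal is correct and follows essentially the same route as the paper: bound $d_m(\hat h,h^*)$ by the uniform-deviation functionals $\varphi(S)$ and $\varphi'(S)$, control each via Lemma~\ref{lemma1} with a union bound at level $\delta/2$ per event, and then apply Lemma~\ref{lemma2} to the expectations, with the constants working out identically. The only cosmetic difference is that you use the sum $\varphi(S)+\varphi'(S)$ where the paper uses $2\max(\varphi(S),\varphi'(S))$; both yield the same final bound.
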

(Detailed proofs can be found in Appendix~\ref{sec:th1_proof}.)

Similarly, we provide our bound for $d_m(\hat{h},h^*)$ with respect to the empirical Rademacher complexity in the following theorem, which follows from Lemma~\ref{lemma1} and Lemma~\ref{lemma2}, as well as union bound arguments.

\begin{theorem}
	\label{th_emp_R}
	The divergence over $m$ experiments is bounded as follows:
	
	$d_m(\hat{h},h^*)\le \frac{4}{m} \sum_{j=1}^{m} \hat{\Re}_{S_j}(\mathcal{F})
	+\frac{1}{m} \sqrt{18 \log{\frac{3}{\delta}} \sum_{j=1}^{m}\frac{1}{n_j}}$ with a probability at least $1-\delta\ (\delta\in(0,1))$.
\end{theorem}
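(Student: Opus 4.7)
The plan is to reuse the skeleton of the proof of Theorem~\ref{th_expected_R} and then replace the population Rademacher complexities $\Re_{n_j}(\mathcal{F})$ by the empirical ones $\hat{\Re}_{S_j}(\mathcal{F})$ at the cost of one extra McDiarmid concentration step from Lemma~\ref{lemma1}. The total failure probability $\delta$ will be split across three events via a union bound, which is the source of the $\log(3/\delta)$ factor.

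First, I would decompose the divergence in the standard ERM way. Using that $\hat{h}$ minimizes $\hat{\mathbb{E}}_{S_1^m}[\cdot]$, so $\hat{\mathbb{E}}_{S_1^m}[\hat{h}]-\hat{\mathbb{E}}_{S_1^m}[h^*]\le 0$, adding and subtracting the empirical loss gives
\begin{equation*}
d_m(\hat{h},h^*)\;\le\;\bigl(\mathbb{E}_{\mathcal{D}_1^m}[\hat{h}]-\hat{\mathbb{E}}_{S_1^m}[\hat{h}]\bigr)+\bigl(\hat{\mathbb{E}}_{S_1^m}[h^*]-\mathbb{E}_{\mathcal{D}_1^m}[h^*]\bigr)\;\le\;\varphi(S)+\varphi'(S).
\end{equation*}
Thus it suffices to upper-bound $\varphi(S)+\varphi'(S)$ in terms of the empirical Rademacher complexities.

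Next, I would apply Lemma~\ref{lemma1} twice, once to $\varphi(S)$ and once to $\varphi'(S)$, each with confidence parameter $\delta/3$, and then invoke Lemma~\ref{lemma2} to control their expectations. This yields, on an event of probability at least $1-2\delta/3$,
\begin{equation*}
\varphi(S)+\varphi'(S)\;\le\;\frac{4}{m}\sum_{j=1}^{m}\Re_{n_j}(\mathcal{F})+\frac{2}{m}\sqrt{\tfrac{1}{2}\log\tfrac{3}{\delta}\sum_{j=1}^{m}\tfrac{1}{n_j}}.
\end{equation*}
At this point the bound is identical in shape to Theorem~\ref{th_expected_R}, only with $\log(3/\delta)$ instead of $\log(2/\delta)$.

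The final step, and the one that actually distinguishes this theorem from Theorem~\ref{th_expected_R}, is to convert $\Re_{n_j}(\mathcal{F})$ into $\hat{\Re}_{S_j}(\mathcal{F})$. I would apply Lemma~\ref{lemma1} a third time, this time to $f(S)=\frac{1}{m}\sum_{j=1}^{m}\hat{\Re}_{S_j}(\mathcal{F})$, whose expectation is $\frac{1}{m}\sum_{j=1}^{m}\Re_{n_j}(\mathcal{F})$, again with confidence $\delta/3$. This gives, on an event of probability at least $1-\delta/3$,
\begin{equation*}
\frac{1}{m}\sum_{j=1}^{m}\Re_{n_j}(\mathcal{F})\;\le\;\frac{1}{m}\sum_{j=1}^{m}\hat{\Re}_{S_j}(\mathcal{F})+\frac{1}{m}\sqrt{\tfrac{1}{2}\log\tfrac{3}{\delta}\sum_{j=1}^{m}\tfrac{1}{n_j}}.
\end{equation*}
A union bound over the three events produces an event of probability at least $1-\delta$ on which both bounds hold simultaneously, and substituting the second into the first adds a coefficient $4$ in front of the deviation term. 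Combining the two deviation terms gives $(4+2)/m=6/m$ in front of $\sqrt{\tfrac{1}{2}\log(3/\delta)\sum_j 1/n_j}$, and pulling the $6$ inside the square root yields the announced constant $\sqrt{18\log(3/\delta)\sum_j 1/n_j}/m$.

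The only real bookkeeping obstacle is getting the constants right: one must allocate the confidence budget symmetrically as $\delta/3$ across the three McDiarmid applications, and then verify that $(6/m)\sqrt{\tfrac{1}{2}\cdot\log(3/\delta)\cdot\sum_j 1/n_j}=(1/m)\sqrt{18\log(3/\delta)\sum_j 1/n_j}$; no new analytic ingredients beyond Lemmas~\ref{lemma1} and \ref{lemma2} are needed.
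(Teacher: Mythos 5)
Your plan matches the paper's own proof essentially step for step: the same three-way union bound over $\varphi(S)$, $\varphi'(S)$, and the deviation of $\frac{1}{m}\sum_j\hat{\Re}_{S_j}(\mathcal{F})$ from its mean via Lemma~\ref{lemma1}, combined with Lemma~\ref{lemma2}, with the same allocation of $\delta/3$ per event and the same constant bookkeeping $(4+2)\sqrt{1/2}=\sqrt{18}$. The only cosmetic difference is that you bound $d_m(\hat{h},h^*)$ by $\varphi(S)+\varphi'(S)$ while the paper uses $2\max(\varphi(S),\varphi'(S))$; both yield the identical final bound.
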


(Detailed proofs can be found in Appendix~\ref{sec:th2_proof}.)

The bounds on $d_m(\hat{h},h^*)$ are dependent on the number of samples for each experiment as shown in Theorem~\ref{th_expected_R} and Theorem~\ref{th_emp_R}. We can further adjust the bounds by identifying the optimal strategy to determine the number of samples for each experiment, so that the bounds are minimal under the constraint in \eqref{cost_constraint}.

Here we assume that the Rademacher complexity is on the order of $O(n^{-\frac{1}{2}})$ where $n$ is the number of collected samples in one experiment. In fact, there is a wide range of examples satisfying this requirement, such as the empirical Rademacher complexity of linear predictors with different constraints (\cite{kakade08}); and the Rademacher complexity of two-layer neural networks as well as kernel methods (\cite{bartlett02}).

Further adjustments on the bounds can be made with the following two theorems:

\begin{theorem}
	\label{th_expected_R_2}
	Given $\Re_{n_j}(\mathcal{F})\le\frac{a_j}{\sqrt{n_j}}$ where $a_j\ge 0$ and \eqref{cost_constraint}, we have that
	
	\begin{align*}
	n_j = \frac{C \sqrt{16 a_j^2 + 2 \log{\frac{2}{\delta}}}}{\sqrt{c_j} \sum_{k=1}^m \sqrt{16 a_k^2 c_k + 2 c_k \log{\frac{2}{\delta}}}}
	\end{align*}
	
	and  
	
	\begin{align*}
	d_m(\hat{h},h^*)&\le \frac{\sqrt{\sum_{j=1}^m \sqrt{16 a_j^2 c_j + 2 c_j \log{\frac{2}{\delta}}}}}{m\sqrt{C}}\bigg[ \sum_{j=1}^m \frac{4 a_j \sqrt[4]{c_j}}{\sqrt[4]{16 a_j^2 + 2 \log{\frac{2}{\delta}}}} + \sqrt{2 \log{\frac{2}{\delta}} \sum_{j=1}^m \frac{\sqrt{c_j}}{\sqrt{16 a_j^2 + 2 \log{\frac{2}{\delta}}}}} \bigg] \\
	&\le \frac{\sqrt{m+1}}{m\sqrt{C}}\sum_{j=1}^m \sqrt{16 a_j^2 c_j +2 c_j \log{\frac{2}{\delta}}}
	\end{align*}
	
	with a probability at least $1-\delta\ (\delta\in(0,1))$.
	
\end{theorem}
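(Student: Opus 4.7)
The plan is to start from Theorem~\ref{th_expected_R}, substitute the assumption $\Re_{n_j}(\mathcal{F}) \le a_j/\sqrt{n_j}$, and obtain $d_m(\hat h, h^*) \le \tfrac{1}{m}\bigl[\sum_j 4a_j/\sqrt{n_j} + \sqrt{2\log(2/\delta)\sum_j 1/n_j}\bigr]$. Setting $A_j := 16 a_j^2 + 2\log(2/\delta)$, I will view the bracket as a sum of $m+1$ non-negative scalars: the $m$ terms $\alpha_j = 4a_j/\sqrt{n_j}$ plus the single term $\alpha_{m+1} = \sqrt{2\log(2/\delta)\sum_j 1/n_j}$. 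Applying the Cauchy--Schwarz/QM--AM inequality $\sum_{i=1}^{m+1}\alpha_i \le \sqrt{m+1}\sqrt{\sum_i \alpha_i^2}$ and observing that $\sum_i \alpha_i^2 = \sum_j 16a_j^2/n_j + 2\log(2/\delta)\sum_j 1/n_j = \sum_j A_j/n_j$ collapses the objective to the single separable quantity $\tfrac{\sqrt{m+1}}{m}\sqrt{\sum_j A_j/n_j}$.

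Next I will minimize $\sum_j A_j/n_j$ subject to $\sum_j c_j n_j \le C$; since the objective strictly decreases in each $n_j$, the constraint is saturated at the optimum. A second application of Cauchy--Schwarz, writing $\sum_j \sqrt{A_j c_j} = \sum_j \sqrt{(A_j/n_j)(c_j n_j)} \le \sqrt{\sum_j A_j/n_j}\cdot\sqrt{\sum_j c_j n_j}$, gives $\sum_j A_j/n_j \ge (\sum_j \sqrt{A_j c_j})^2/C$, with equality iff $n_j \propto \sqrt{A_j/c_j}$. Imposing $\sum_j c_j n_j = C$ fixes the proportionality constant and yields exactly the closed-form $n_j$ stated in the theorem.

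The two inequalities now follow by back-substitution. Plugging $\sum_j A_j/n_j = (\sum_j \sqrt{A_j c_j})^2/C$ into the $\sqrt{m+1}$ bound of the first step immediately delivers the second (looser) inequality, $\tfrac{\sqrt{m+1}}{m\sqrt{C}}\sum_j \sqrt{A_j c_j}$. For the first (sharper) inequality I instead substitute the same $n_j$ back into the original unweakened expression $\tfrac{4}{m}\sum_j a_j/\sqrt{n_j} + \tfrac{1}{m}\sqrt{2\log(2/\delta)\sum_j 1/n_j}$; using $1/\sqrt{n_j} = c_j^{1/4}\sqrt{\sum_k\sqrt{A_k c_k}}\big/\bigl(\sqrt{C}\,A_j^{1/4}\bigr)$, the common factor $\sqrt{\sum_k\sqrt{A_k c_k}}/(m\sqrt{C})$ pulls out of both summands and leaves the bracketed expression in the theorem.

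The main obstacle, in my view, is spotting the correct Cauchy--Schwarz grouping in the first step: bundling the $m$ Rademacher summands together with the single McDiarmid summand as an $(m{+}1)$-tuple is precisely what fuses the additive-plus-square-root objective into a single separable sum $\sum_j A_j/n_j$ with the clean weights $A_j = 16a_j^2 + 2\log(2/\delta)$, which in turn is what allows the cost-allocation problem to be solved in closed form and exposes the $\sqrt{m+1}$ factor. Once that grouping is identified, the remaining two applications of Cauchy--Schwarz and the back-substitutions are mechanical algebra.
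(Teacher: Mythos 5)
Your proof is correct and follows the paper's overall architecture: start from Theorem~\ref{th_expected_R}, substitute $\Re_{n_j}(\mathcal{F})\le a_j/\sqrt{n_j}$, apply the $(m+1)$-term Cauchy--Schwarz grouping that fuses the $m$ Rademacher summands with the McDiarmid term into $\frac{\sqrt{m+1}}{m}\sqrt{\sum_j \gamma_j/n_j}$ with $\gamma_j=16a_j^2+2\log\frac{2}{\delta}$, optimize the allocation of $n_j$ under the cost constraint, and back-substitute into both the unweakened bound (for the sharper inequality) and the collapsed bound (for the looser one). The one place you genuinely diverge is the allocation step: the paper sets up $\min\sum_j\gamma_j/n_j$ s.t.\ $\sum_j c_jn_j\le C$ as a constrained program and solves it via the Lagrangian dual, invoking Slater's condition for strong duality and complementary slackness to kill the inequality multipliers, whereas you obtain the same optimum in two lines from $\sum_j\sqrt{\gamma_jc_j}\le\sqrt{\sum_j\gamma_j/n_j}\cdot\sqrt{\sum_jc_jn_j}$ with the equality condition $n_j\propto\sqrt{\gamma_j/c_j}$. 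Your route is more elementary and self-contained (no appeal to convex duality), and it delivers the optimality of the stated $n_j$ together with the closed-form minimum value $(\sum_j\sqrt{\gamma_jc_j})^2/C$ in one stroke; the paper's duality formulation is heavier machinery for the same closed form, though it would generalize more readily if the objective were not amenable to a clean Cauchy--Schwarz equality case. All of your back-substitution algebra checks out against the stated bounds.
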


(Detailed proofs can be found in Appendix~\ref{sec:th3_proof}.)

\begin{theorem}
	\label{th_emp_R_2}
	Given $\hat{\Re}_{S_j}(\mathcal{F})\le\frac{a_j}{\sqrt{n_j}}$ where $a_j\ge 0$ and \eqref{cost_constraint}, we have that 
	
	\begin{align*}
	n_j = \frac{C \sqrt{16 a_j^2 + 18 \log{\frac{3}{\delta}}}}{\sqrt{c_j} \sum_{k=1}^m \sqrt{16 a_k^2 c_k + 18 c_k \log{\frac{3}{\delta}}}}
	\end{align*}
	
	and 
	
	\begin{align*}
	d_m(\hat{h},h^*) &\le \frac{\sqrt{\sum_{j=1}^m \sqrt{16 a_j^2 c_j + 18 c_j \log{\frac{3}{\delta}}}}}{m\sqrt{C}}\bigg[\sum_{j=1}^m \frac{4 a_j \sqrt[4]{c_j}}{\sqrt[4]{16 a_j^2 + 18 \log{\frac{3}{\delta}}}} + \sqrt{18 \log{\frac{3}{\delta}} \sum_{j=1}^m \frac{\sqrt{c_j}}{\sqrt{16 a_j^2 + 18 \log{\frac{3}{\delta}}}}} \bigg] \\
	&\le \frac{\sqrt{m+1}}{m\sqrt{C}}\sum_{j=1}^m \sqrt{16 a_j^2 c_j+18 c_j \log{\frac{3}{\delta}}}
	\end{align*}
	
	with a probability at least $1-\delta\ (\delta\in(0,1))$.
	
\end{theorem}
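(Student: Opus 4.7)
The plan is to mirror the proof of Theorem~\ref{th_expected_R_2}, starting from Theorem~\ref{th_emp_R} instead of Theorem~\ref{th_expected_R}. Writing $\beta := 18\log(3/\delta)$ for brevity and substituting the assumed bound $\hat{\Re}_{S_j}(\mathcal{F}) \le a_j/\sqrt{n_j}$ into Theorem~\ref{th_emp_R}, I obtain, with probability at least $1-\delta$,
\[
d_m(\hat{h},h^*) \le \frac{1}{m}\left( 4\sum_{j=1}^{m}\frac{a_j}{\sqrt{n_j}} + \sqrt{\beta \sum_{j=1}^{m}\frac{1}{n_j}} \right),
\]
so the theorem reduces to minimizing this expression over $\{n_j\}$ subject to $\sum_j c_j n_j = C$.

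First, I would set up the Lagrangian and solve the stationarity conditions. Because the square-root-of-sum term couples the $n_j$'s, the equations are non-separable; my approach is to introduce $U := \sqrt{\beta \sum_j 1/n_j}$ as an auxiliary variable, solve for each $n_j$ in terms of $U$ and the Lagrange multiplier $\lambda$, and then use the definitions of $U$ together with the budget constraint to close the system. This yields the stated closed form $n_j = C\sqrt{16 a_j^2 + \beta}/(\sqrt{c_j}\sum_k \sqrt{c_k(16 a_k^2+\beta)})$, which saturates the budget. Substituting back and pulling out the common factor $\sqrt{S}/(m\sqrt{C})$, where $S := \sum_j \sqrt{c_j(16 a_j^2+\beta)}$, produces the first (detailed) inequality.

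For the cleaner $\sqrt{m+1}\,S/(m\sqrt{C})$ bound, let $b_j := \sqrt{16 a_j^2 + \beta}$ and $T := \sum_j \beta\sqrt{c_j}/b_j$, so that the bracketed quantity reads $4\sum_j a_j c_j^{1/4}/b_j^{1/2} + \sqrt{T}$. Using $16 a_j^2 = b_j^2 - \beta$ and the elementary inequality $(\sum_j x_j)^2 \le m \sum_j x_j^2$ shows the first term is at most $\sqrt{m(S-T)}$. A Young-type inequality $(a+b)^2 \le (1+t)a^2 + (1+1/t)b^2$ applied with $a = \sqrt{m(S-T)}$, $b = \sqrt{T}$, and $t = 1/m$ gives
\[
\bigl(\sqrt{m(S-T)} + \sqrt{T}\bigr)^2 \le (1 + 1/m)\,m(S-T) + (1+m)\,T = (m+1)\,S,
\]
which delivers the claimed $\sqrt{m+1}$ factor.

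The main obstacle is the Lagrangian step: the square-root-of-sum makes the optimization coupled rather than separable, and some careful algebra is needed to extract the closed form for $n_j$. Once that formula is in hand, the remaining work is direct substitution followed by the Young-type inequality above; the argument is essentially identical to Theorem~\ref{th_expected_R_2} with $2\log(2/\delta)$ replaced by $18\log(3/\delta)$.
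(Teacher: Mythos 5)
Most of your computation is sound, but the step you single out as ``the main obstacle'' --- extracting the closed form for $n_j$ from the Lagrangian of the coupled objective --- is where your plan breaks down. The stationarity conditions for minimizing $\frac{1}{m}\big(4\sum_j a_j n_j^{-1/2} + \sqrt{\beta\sum_j n_j^{-1}}\big)$ subject to $\sum_j c_j n_j = C$ read, with your auxiliary $U=\sqrt{\beta\sum_k 1/n_k}$,
\begin{align*}
2a_j\,n_j^{-3/2} \;+\; \frac{\beta}{2U}\,n_j^{-2} \;=\; \lambda\, c_j ,
\end{align*}
and if you substitute the claimed $n_j = C\sqrt{16a_j^2+\beta}\big/\big(\sqrt{c_j}\sum_k\sqrt{c_k(16a_k^2+\beta)}\big)$ you find that the left-hand side divided by $c_j$ is \emph{not} constant in $j$ in general (it is a sum of a term proportional to $a_j c_j^{-1/4}(16a_j^2+\beta)^{-3/4}$ and a term proportional to $(16a_j^2+\beta)^{-1}$, which do not combine into something $j$-independent). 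So the system does not ``close'' to the stated formula; the stated $n_j$ is not the exact minimizer of the coupled objective. What the paper actually does (in the proof of Theorem~\ref{th_expected_R_2}, which Theorem~\ref{th_emp_R_2} inherits) is apply Cauchy--Schwarz \emph{first}, over the $m+1$ square-root terms, to get the fully separable surrogate $\frac{\sqrt{m+1}}{m}\sqrt{\sum_j \gamma_j/n_j}$ with $\gamma_j = 16a_j^2+\beta$; minimizing $\sum_j\gamma_j/n_j$ under \eqref{cost_constraint} is a standard separable problem whose exact solution is precisely the stated $n_j$. Alternatively, since the theorem only asserts the bound \emph{for that particular choice} of $n_j$, you can skip the optimization entirely and verify by direct substitution --- which is in fact what the rest of your argument does.

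The remainder of your proposal is correct and matches the paper in substance: substituting the given $n_j$ into Theorem~\ref{th_emp_R} yields the first displayed inequality exactly, and your passage to the $\sqrt{m+1}$ bound (writing $S=\sum_j\sqrt{c_j}\,b_j$, $T=\sum_j\beta\sqrt{c_j}/b_j$, bounding the first bracketed term by $\sqrt{m(S-T)}$ and applying $(\sqrt{m(S-T)}+\sqrt{T})^2\le(m+1)S$) is algebraically equivalent to the paper's single Cauchy--Schwarz over $m+1$ terms, just applied after the substitution rather than before. So the fix is localized: replace the coupled-Lagrangian derivation of $n_j$ with either the paper's decouple-then-optimize argument or a plain ``plug in and verify.''
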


\begin{proof}
	We proceed with the proof as in Theorem~\ref{th_expected_R_2}.
\end{proof}

From Theorem~\ref{th_expected_R_2} and Theorem~\ref{th_emp_R_2} we can observe that $d_m(\hat{h},h^*)$ converges to $0$ at a rate of $O(C^{-\frac{1}{2}})$. On the other hand, Theorem~\ref{th_expected_R_2} and Theorem~\ref{th_emp_R_2} provide the way to determine the number of samples needed for each of the $m$ experiments. By assuming $a_j^2 \gg \log{\frac{1}{\delta}}$, the number of samples for one experiment should be proportional to a constant factor of its Rademacher complexity, meanwhile inversely proportional to the square root of its per-sample cost $c_j$.
\begin{table*}[t]
	\caption{Rates of Learning from Multiple Experiments for Different Problems}
	\label{table1}
	\centering
	\begin{tabular}{lll}
		\toprule
		PREDICTOR     & UPPER BOUND ON $d_m(\hat{h},h^*)$ \\
		\midrule
		Linear Predictors ($L_2/L_2$ norms)
		& $\frac{\sqrt{m+1}}{m\sqrt{C}}\sum_{j=1}^m \sqrt{16 X_{2,j}^2 W_2^2 c_j+18 c_j \log{\frac{3}{\delta}}}$\\
		Linear Predictors ($L_\infty/L_1$ norms)
		& $\frac{\sqrt{m+1}}{m\sqrt{C}}\sum_{j=1}^m \sqrt{32 X_{\infty,j}^2 W_1^2 c_j \log{l} +18 c_j \log{\frac{3}{\delta}}}$\\
		Two-Layer Neural Networks
		& $\frac{\sqrt{m+1}}{m\sqrt{C}}\sum_{j=1}^m \sqrt{16 B^2 X_{\infty,j}^2 c_j \log{l} +2 c_j \log{\frac{2}{\delta}}}$\\
		Kernel Methods
		& $\frac{\sqrt{m+1}}{m\sqrt{C}} \sum_{j=1}^{m} \sqrt{64 B_j^2 \mathbb{E}_{x_j}[k(x_j, x_j)] c_j +2 c_j \log{\frac{2}{\delta}}}$\\
		\bottomrule
	\end{tabular}
\end{table*}

\section{Examples}

While the upper bounds provided in Theorem~\ref{th_expected_R_2} and Theorem~\ref{th_emp_R_2} have a rate of $O(C^{-\frac{1}{2}})$, they are also dependent on $\frac{\sqrt{m+1}}{m}\sum_{j=1}^m \sqrt{16 a_j^2 c_j+2 c_j \log{\frac{2}{\delta}}}$. Here we present some specific examples with Theorem~\ref{th_expected_R_2} to intuitively understand the behavior of this term. Additionally, we provide some examples for linear prediction, two-layer neural networks and kernel methods.

\subsection{Experiments with Rademacher Complexity of Large Constant Factors}
\label{sec:increasing_complexity}

Assume $\Re_{n_j}(\mathcal{F})\le\frac{a_j}{\sqrt{n_j}}$ where $a_j \gg \frac{1}{4} \sqrt{2 \log{\frac{2}{\delta}}}$. Then from Theorem~\ref{th_expected_R_2} we get:

\begin{align}
\label{example_bound}
d_m(\hat{h},h^*)&\le \frac{\sqrt{\sum_{j=1}^m \sqrt{16 a_j^2 c_j + 2 c_j \log{\frac{2}{\delta}}}}}{m\sqrt{C}}\bigg[ \sum_{j=1}^m \frac{4 a_j \sqrt[4]{c_j}}{\sqrt[4]{16 a_j^2 + 2 \log{\frac{2}{\delta}}}} + \sqrt{2 \log{\frac{2}{\delta}} \sum_{j=1}^m \frac{\sqrt{c_j}}{\sqrt{16 a_j^2 + 2 \log{\frac{2}{\delta}}}}} \bigg]\nonumber\\
&\approx \frac{\sqrt{\sum_{j=1}^m \sqrt{16 a_j^2 c_j}}}{m\sqrt{C}} \sum_{j=1}^m \frac{4 a_j \sqrt[4]{c_j}}{\sqrt[4]{16 a_j^2}}\nonumber\\
&= \frac{\sqrt{\sum_{j=1}^m 4 a_j \sqrt{c_j}} \sum_{j=1}^m \sqrt{4 a_j \sqrt{c_j}}}{m\sqrt{C}}\nonumber\\
&\le \frac{4 \big( \sum_{j=1}^m \sqrt{a_j \sqrt{c_j}} \big)^2}{m\sqrt{C}}
\end{align}

To give one specific example, here we assume that $a_j= A j^2$, and $c_j= K e^{-sj} (s>0) \ j=1,2,\dots,m$, where $A, K>0$ are absolute constants. Then from \eqref{example_bound} we get:

\begin{align*}
d_m(\hat{h},h^*)&\le \frac{4 \big( \sum_{j=1}^m \sqrt{a_j \sqrt{c_j}} \big)^2}{m\sqrt{C}}\\
&\le \frac{4 \big( \sum_{j=1}^\infty \sqrt{a_j \sqrt{c_j}} \big)^2}{m\sqrt{C}}\\
&= \frac{4 A \sqrt{K} e^{\frac{s}{2}} }{ (e^{\frac{s}{4}}-1)^4 m\sqrt{C}}
\end{align*}

Note that our bound is clearly upper bounded by a value on the order of $O(m^{-1} C^{-\frac{1}{2}})$.

\subsection{Some Learning Problems}
\label{sec:some_examples}

Here we present some examples of learning from multiple experiments, for problems with Rademacher complexity upper-bounded by $O(n^{-\frac{1}{2}})$, where $n$ is the number of collected samples in one experiment. We summarize our learning bounds in Table~\ref{table1}.

Define $\mathcal{G}=\big\{g\big|(\forall j)\ g:\mathcal{X}_j\to\mathcal{Y}_j\big\}$, so that we can define accordingly $\mathcal{F}=\big\{h(z_j)=L(y_j,g(x_j))\big|z_j=(x_j,y_j), g\in\mathcal{G}\big\}$ where $(\forall j)\ x_j\in\mathcal{X}_j,y_j\in\mathcal{Y}_j$. Assume $(\forall j)\ L: \mathcal{Y}_j\times\mathcal{Y}_j\to[0, 1]$ to be a $1$-Lipschitz function. For regression, we assume $L(y_j,y_j')=min(1,\frac{(y_j-y_j')^2}{2})$ where $y_j\in\mathbb{R}$. For classification, we assume $L(y_j,y_j')=max(0,\frac{1-y_jy_j'}{2})$ where $y_j\in\{-1,1\}$.

Note that by Ledoux-Talagrand contraction (\cite{ledoux2013probability}), the following holds:

\begin{equation}
\label{LT_emp}
\hat{\Re}_{S_j}(\mathcal{F})\le\hat{\Re}_{S_j}(\mathcal{G})
\end{equation}
and
\begin{equation}
\label{LT_exp}
\Re_{n_j}(\mathcal{F})\le\Re_{n_j}(\mathcal{G})
\end{equation}

\paragraph{Linear Predictors ($L_2/L_2$ norms).}
\label{sec:linear}

Assume $\mathcal{G}$ is a set of linear predictors, let $(\forall j)\ \Vert x_j \Vert_2 \le X_{2,j}$, $\mathcal{G}=\big\{w^T x \big| \Vert w \Vert_2 \le W_2\big\}$. By Theorem 1 in \cite{kakade08} and from \eqref{LT_emp} we have:

\begin{equation}
\hat{\Re}_{S_j}(\mathcal{F})\le\frac{X_{2,j} W_2}{\sqrt{n_j}}
\end{equation}

\paragraph{Linear Predictors ($L_\infty/L_1$ norms).}

Assume $\mathcal{G}$ is a set of linear predictors, let $(\forall j)\ x_j \in \mathbb{R}^l$, $\Vert x_j \Vert_{\infty} \le X_{\infty,j}$, $\mathcal{G}=\big\{w^T x \big| \Vert w \Vert_1 \le W_1\big\}$. By Theorem 1 in \cite{kakade08} and from \eqref{LT_emp} we have:

\begin{equation}
\hat{\Re}_{S_j}(\mathcal{F})\le\frac{X_{\infty,j} W_1 \sqrt{2 \log{l}}}{\sqrt{n_j}}
\end{equation}

\paragraph{Two-Layer Neural Networks.}

Assume $\mathcal{G}= \big\{\sum_i w_i t(v_i^T x_j) \big| \Vert w \Vert_1 \le 1, (\forall i)\ \Vert v_i \Vert_1 \le B\big\}$ with a $1$-Lipschitz function $t: \mathbb{R} \to [-1, 1]$ satisfying $t(0)=0$. Let $(\forall i,j)\ x_j \in \mathbb{R}^l$, $v_i \in \mathbb{R}^l$ with the constraints of $\Vert x_j \Vert_{\infty} \le X_{\infty,j}$. By Theorem 18 in \cite{bartlett02}, Lemma 4 in \cite{bartlett02} and \eqref{LT_exp} we have (See Appendix~\ref{sec:neural_networks} for a detailed proof)

\begin{equation}
\Re_{n_j}(\mathcal{F})\le\frac{B X_{\infty,j} \sqrt{\log{l}} }{\sqrt{n_j}}
\end{equation}

with $B>0$ being an absolute constant.

\paragraph{Kernel Methods.}

Assume $\mathcal{G}=\big\{\sum_{i=1}^{n_j} \alpha_i k(x_{j,i}, x_j)
\big| \sum_{i,k} \alpha_i \alpha_k k(x_{j,i}, x_{j,k}) \le B_j^2\big\}$ is a kernel expansion with $(\forall j)\ x_j \in \mathcal{X}_j$ and a kernel $(\forall j)\ k: \mathcal{X}_j \times \mathcal{X}_j \to \mathbb{R}$. By Lemma 22 in \cite{bartlett02} and \eqref{LT_exp} we have (See Appendix~\ref{sec:kernel_methods} for a detailed proof):

\begin{equation}
\Re_{n_j}(\mathcal{F})\le\frac{2 B_j \sqrt{\mathbb{E}_{x_j}[k(x_j, x_j)]}}{\sqrt{n_j}}
\end{equation}

\begin{figure*}[t]
	\centering
	\begin{subfigure}{0.24\textwidth}
		\centering
		\includegraphics[width=\textwidth]{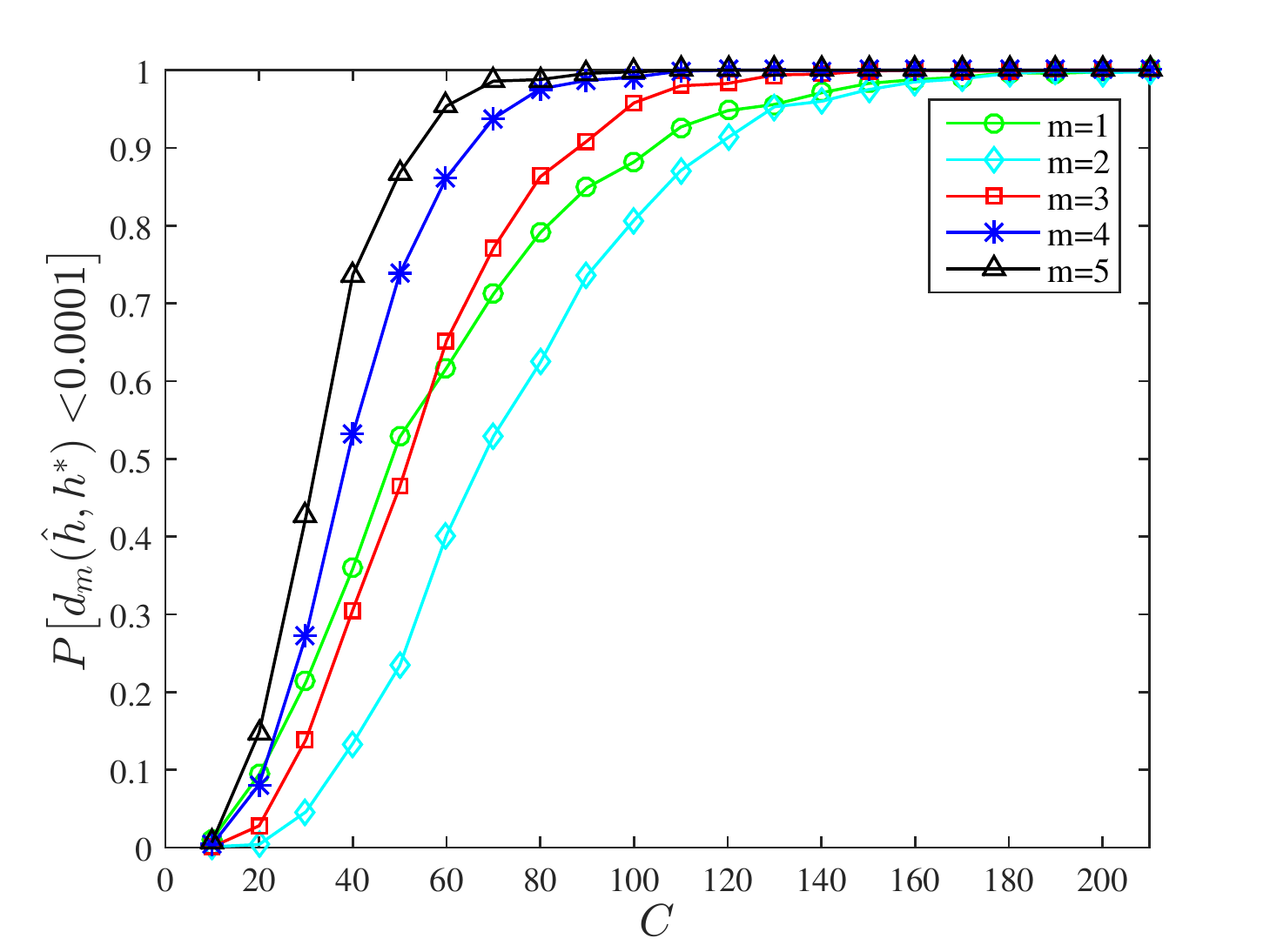}
		\caption{}
		\label{decrease_X_dm}
	\end{subfigure}
	\begin{subfigure}{0.24\textwidth}
		\centering
		\includegraphics[width=\columnwidth]{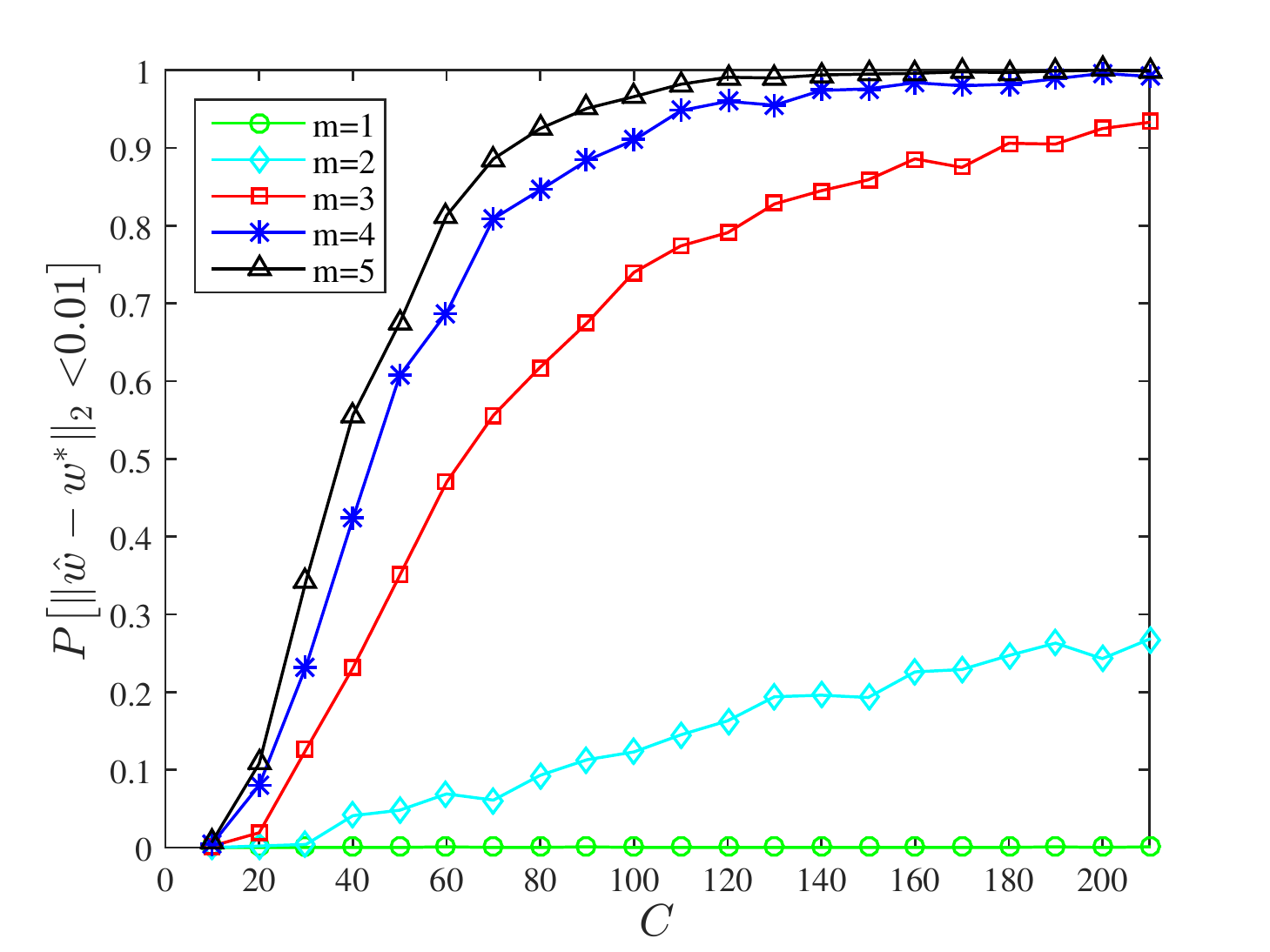}
		\caption{}
		\label{decrease_X_w_norm}
	\end{subfigure}
	\begin{subfigure}{0.24\textwidth}
		\centering
		\includegraphics[width=\textwidth]{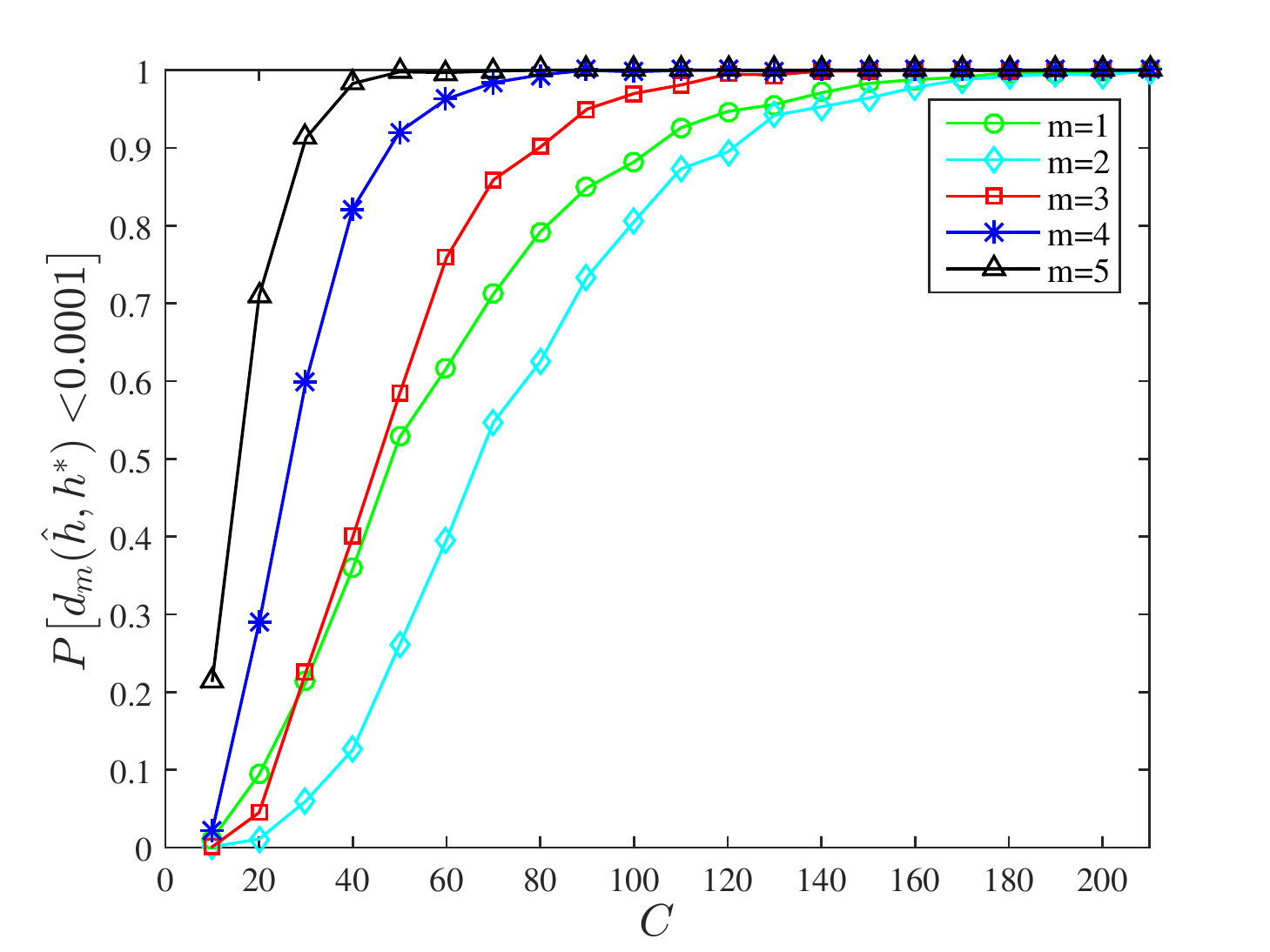}
		\caption{}
		\label{increase_X_dm}
	\end{subfigure}
	\begin{subfigure}{0.24\textwidth}
		\centering
		\includegraphics[width=\columnwidth]{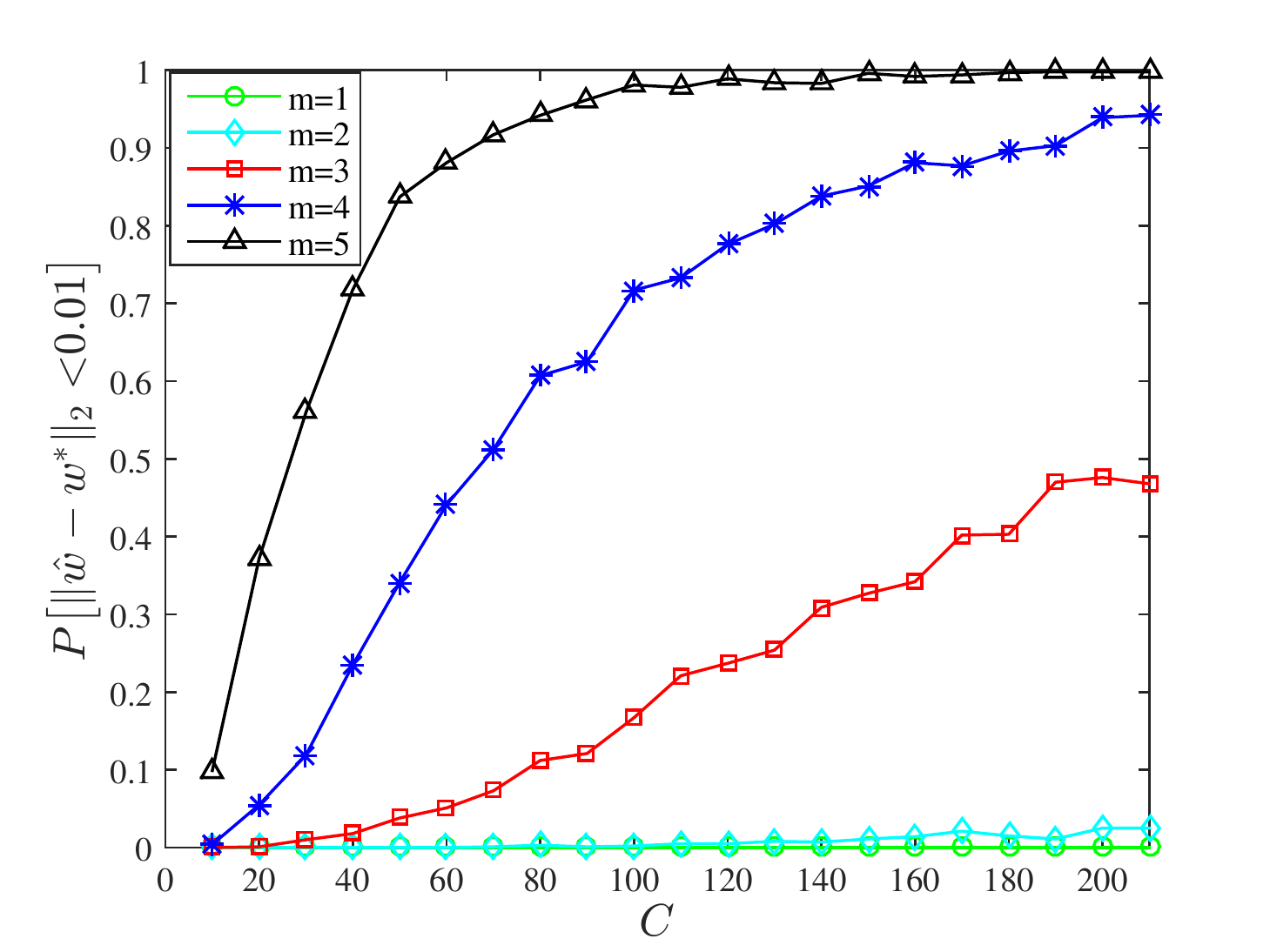}
		\caption{}
		\label{increase_X_w_norm}
	\end{subfigure}
	\caption{(a,c) Probability of success for $d_m (\hat{h}, h^*)<0.0001$ versus the total cost $C$. (b,d) Probability of success for $\Vert \hat{w} - w^* \Vert_2 <0.01$ versus the total cost $C$. The additional parameters were set to $W_2=10, s=1, l=10, \epsilon_b=0.1$. $X_{2,j} = l^{\frac{(5-j)}{8}}, j = 1,\dots, 5$ for (a,b) and $X_{2,j} = l^{\frac{(j-1)}{8}}, j = 1,\dots, 5$ for (c,d). Each point is the average result of 1000 repetitions.}
	\label{Numerical_Examples}
\end{figure*}

\section{Experiments}

In this section we present numerical validation of the proposed learning framework with synthetic datasets. The synthetic dataset for the $j$th experiment was generated from a hidden variable $\zeta_{j}$, with each hidden sample $\zeta_{j,i}=(\zeta_{j,i}^{(1)},\dots,\zeta_{j,i}^{(l/2)})$ being a $l/2$-dimensional vector, satisfying $\zeta_{j,i}^{(k)} \sim unif(-\frac{2 X_{2,j}}{l^{\frac{3}{2}}},\frac{2 X_{2,j}}{l^{\frac{3}{2}}})$. Each observed sample $x_{j,i}$ was then generated with the following equation:
\begin{equation}
x_{j,i} = A_j \zeta_{j,i}
\end{equation}

\noindent where $A_j \in \{+1,-1\}^{l \times l/2}$ projects the lower-dimensional hidden variables for the $j$-th experiment with $\pm 1$ randomly assigned to each entry of $A_j$, so that each sample $x_{j,i}=(x_{j,i}^{(1)},\dots,x_{j,i}^{(l)})$ is a $l$-dimensional vector satisfying $\Vert x_{j,i} \Vert_2 \le X_{2,j}$. By assuming $\mathcal{F}$ to be a set of linear predictors, each hypothesis $h \in \mathcal{F}$ is associated with a weight vector $w\ (\Vert w \Vert_2 \le W_2)$. The true hypothesis $w^*$ was generated randomly through $w^{*(k)} \sim unif(-\frac{W_2}{\sqrt{l}}, \frac{W_2}{\sqrt{l}})$. In this way, $x_j$ and $w$ are thus following the constraints of $L_2/L_2$ norms as in Section~\ref{sec:linear}. The output $y_j$ was generated by $y_{j,i} = w^{*T} x_{j,i} + \epsilon$ where $\epsilon \sim unif(-\epsilon_b,\epsilon_b)$. Note that since all $x_{j}$ are generated from lower-dimensional hidden variables $\zeta_{j}$, then $w^*$ is not identifiable by learning from one experiment only.

We also define the per-sample cost to be $c_j = (e^s-1)e^{-sj} (s>0) \ j=1,2,\dots,m$, as discussed in Section~\ref{sec:increasing_complexity}.

For each simulation, given the total cost $C$, we determined the number of samples for each experiment $n_j$ according to:

\begin{equation}
\label{n_j_split2}
n_j=C \frac{X_{2,j}}{\sqrt{c_j} \sum_{k=1}^{m} X_{2,k} \sqrt{c_k}}
\end{equation}

Here we make $X_{2,j} W_2$ to be sufficiently large, so that the number of samples determined by \eqref{n_j_split2} is close to the one prescribed by Theorem~\ref{th_expected_R_2}, thus, making it possible to disregard the term $\delta$.

After generating samples, $\hat{w}$ is identified by solving the following optimization problem:

\begin{equation*}
\hat{w}=\argmin_{%
	\substack{%
		\Vert w \Vert_2 \le W_2
	}
}
\frac{1}{m} \sum_{j=1}^m \frac{1}{2 n_j} \sum_{i=1}^{n_j} (w^T x_{j,i} - y_{j,i})^2
\end{equation*}
In order to evaluate our framework, we derived $d_m (\hat{h}, h^*)$ analytically. More precisely, $d_m (\hat{h}, h^*) = \frac{2}{3ml^3} \sum_{j=1}^m X_{2,j}^2 \Vert (\hat{w}-w^*)^T A_j \Vert_2^2$. (Details can be found in Appendix~\ref{sec:experiment_details}).

To empirically evaluate the outcome of having multiple experiments with the proposed framework, a sequence of $X_{2,j}$ was designed such that $X_{2,j}$ was either in a decreasing order (Figure~\ref{decrease_X_dm} and \ref{decrease_X_w_norm}) or an increasing order (Figure~\ref{increase_X_dm} and \ref{increase_X_w_norm}). For each simulation with $m$ experiments, $\hat{w}$ was learned from $m$ datasets with constraints of $X_{2,1}$ to $X_{2,m}$. Figure~\ref{decrease_X_dm} and \ref{increase_X_dm} suggest that $d_m (\hat{h}, h^*)$ can be reduced with higher total cost $C$ regardless of the number of experiments, and more experiments in general leads to a faster reduction. On the other hand, as it can be seen from Figure~\ref{decrease_X_w_norm} and \ref{increase_X_w_norm}, while recovery of $w^*$ can be guaranteed almost equally well for learning from four or more experiments when the total cost $C$ is sufficiently large ($C>200$ in this case), in general, learning with more experiments clearly shows a better performance for a wide range of values of $C$. Moreover, $w^*$ cannot be recovered correctly when less than three experiments were performed. This verified the benefits of having more experiments to improve the hypothesis identifiability, as proved in Theorem~\ref{multi_exp_theorem}.

\section{Concluding Remarks}
A direct extension of current work is to derive the upper bound of $d_m(\hat{h},h^*)$ given different forms of Rademacher complexities, especially the ones on the order of $O(n^k)$ with $k>-\frac{1}{2}$. Another interesting direction is to propose some underlying mechanisms to connect the Rademacher complexity with the per-sample cost for each experiment, so that the whole bound can be more tightly associated with the experiment design given a total cost budget.

\bibliographystyle{plainnat}
\bibliography{Learning_from_Multiple_Experiments}

\clearpage
\appendix
\section{Detailed Proofs}
\label{sec:detailed_proofs}

\subsection{Proof of Theorem 1}
\label{sec:multi_exp_theorem_proof}

\begin{proof}
	Since $(\forall j) h^* \in \mathcal{H}_j^*$, then for any $h' \ne h^*$, from \eqref{combined_expected_loss} and \eqref{h_start_constraint} we have:
	
	\begin{align*}
	\mathbb{E}_{\mathcal{D}_1^m}[h'] &= \frac{1}{m} \sum_{j=1}^{m} \mathbb{E}_{z_j\sim\mathcal{D}_j}[h'(z_j)]\\
	&\ge \frac{1}{m} \sum_{j=1}^{m} \mathbb{E}_{z_j\sim\mathcal{D}_j}[h^*(z_j)]\\
	&= \mathbb{E}_{\mathcal{D}_1^m}[h^*]
	\end{align*}
	
	Therefore $h^* \in \mathcal{H}^*$. Similarly, for all $h \in \mathcal{H}_1^* \cap \mathcal{H}_2^* \cap \dots \cap \mathcal{H}_m^*$, we have $h \in \mathcal{H}^*$.
	
	On the other hand, if there $\exists \tilde{h} \in \mathcal{H}^*$ but $\tilde{h} \notin \mathcal{H}_1^* \cap \mathcal{H}_2^* \cap \dots \cap \mathcal{H}_m^*$, then at least one of the following condition will hold:
	
	\begin{align*}
	\tilde{h} \notin \mathcal{H}_1^*\ \  or \ \  \tilde{h} \notin \mathcal{H}_2^* \ \ or \ \ \dots \ \ or \ \ \tilde{h} \notin \mathcal{H}_m^*
	\end{align*}
	
	Without loss of generality we assume $\tilde{h} \notin \mathcal{H}_1^*$. Then we have:
	
	\begin{align*}
	\mathbb{E}_{\mathcal{D}_1^m}[\tilde{h}] &= \frac{1}{m} \sum_{j=1}^{m} \mathbb{E}_{z_j\sim\mathcal{D}_j}[\tilde{h}(z_j)]\\
	&> \frac{1}{m} \mathbb{E}_{z_1\sim\mathcal{D}_1}[h^*(z_1)] + \frac{1}{m} \sum_{j=2}^{m} \mathbb{E}_{z_j\sim\mathcal{D}_j}[\tilde{h}(z_j)]\\
	&\ge \frac{1}{m} \sum_{j=1}^{m} \mathbb{E}_{z_j\sim\mathcal{D}_j}[h^*(z_j)]\\
	&= \mathbb{E}_{\mathcal{D}_1^m}[h^*]
	\end{align*}
	
	Therefore $\tilde{h} \notin \mathcal{H}^*$. Which proofs the theorem.
	
\end{proof}

\subsection{Proof of Lemma 2}
\label{sec:lemma2_proof}
\begin{proof}
	We bound $\mathbb{E}_{S_1^m}[\varphi(S)]$ in terms of the Rademacher complexity of $\mathcal{F}$, by introducing a set of 'ghost samples' $T_1^m=\{\tilde{z}_{1,1} \dots \tilde{z}_{j,i} \dots \tilde{z}_{m,n_m}\}$ of $N$ independent samples drawn from $\mathcal{D}_1, \dots \mathcal{D}_m$. We also specifically define $T_j=\{\tilde{z}_{j,1} \dots \tilde{z}_{j,n_j}\}$ as a 'ghost dataset' drawn from $\mathcal{D}_j$.
	
	Let $\sigma = \{\sigma_{1,1} \dots \sigma_{j,i} \dots \sigma_{m,n_m}\}$ be $N$ independent Rademacher random variables. By applying Jensen's inequality and convexity of the supremum function, we have:
	
	\begin{align*}
	\mathbb{E}_{S_1^m}[\varphi(S)]
	&= \mathbb{E}_{S_1^m}\bigg[\sup\limits_{h \in \mathcal{F}}\bigg(\mathbb{E}_{\mathcal{D}_{1}^{m}}[h]-\hat{\mathbb{E}}_{S_1^m}[h]\bigg)\bigg]\\
	&= \mathbb{E}_{S_1^m}\bigg[\sup\limits_{h \in \mathcal{F}}\bigg(\mathbb{E}_{T_1^m}\bigg[\hat{\mathbb{E}}_{T_1^m}[h]-\hat{\mathbb{E}}_{S_1^m}[h] \bigg|S_1^m \bigg]\bigg)\bigg]\\
	\end{align*}
	\begin{align*}
	&= \mathbb{E}_{S_1^m}\bigg[\sup\limits_{h \in \mathcal{F}}\bigg(\mathbb{E}_{T_1^m}\bigg[\frac{1}{m}\sum_{j=1}^{m}\frac{1}{n_j}\sum_{i=1}^{n_j}h(\tilde{z}_{j,i})-\frac{1}{m}\sum_{j=1}^{m}\frac{1}{n_j}\sum_{i=1}^{n_j}h(z_{j,i}) \bigg|S_1^m \bigg]\bigg)\bigg]\\
	&= \mathbb{E}_{S_1^m}\bigg[\sup\limits_{h \in \mathcal{F}}\bigg(\mathbb{E}_{T_1^m}\bigg[\sum_{j=1}^{m}\frac{1}{m n_j}\sum_{i=1}^{n_j} (h(\tilde{z}_{j,i})-h(z_{j,i})) \bigg|S_1^m \bigg]\bigg)\bigg]\\
	&\le \mathbb{E}_{S_1^m}\bigg[\mathbb{E}_{T_1^m}\bigg[\sup\limits_{h \in \mathcal{F}}\bigg(\sum_{j=1}^{m}\frac{1}{m n_j}\sum_{i=1}^{n_j} (h(\tilde{z}_{j,i})-h(z_{j,i})) \bigg) \bigg|S_1^m \bigg]\bigg]\\
	&= \mathbb{E}_{S_1^m,T_1^m}\bigg[\sup\limits_{h \in \mathcal{F}}\bigg(\sum_{j=1}^{m}\frac{1}{m n_j}\sum_{i=1}^{n_j} (h(\tilde{z}_{j,i})-h(z_{j,i})) \bigg) \bigg]\\
	&= \frac{1}{2} \mathbb{E}_{S_1^m,T_1^m}\bigg[\sup\limits_{h \in \mathcal{F}}\bigg( \frac{1}{m n_1}(h(\tilde{z}_{1,1})-h(z_{1,1}))+\frac{1}{m n_1}\sum_{i=2}^{n_1} (h(\tilde{z}_{1,i})-h(z_{1,i}))\\
	&\ \ \ \ +\sum_{j=2}^{m}\frac{1}{m n_j}\sum_{i=1}^{n_j} (h(\tilde{z}_{j,i})-h(z_{j,i})) \bigg) \bigg]+\frac{1}{2} \mathbb{E}_{S_1^m,T_1^m}\bigg[\sup\limits_{h \in \mathcal{F}}\bigg( \frac{1}{m n_1}(h(z_{1,1})-h(\tilde{z}_{1,1}))\\
	&\ \ \ \ +\frac{1}{m n_1}\sum_{i=2}^{n_1} (h(\tilde{z}_{1,i})-h(z_{1,i}))+\sum_{j=2}^{m}\frac{1}{m n_j}\sum_{i=1}^{n_j} (h(\tilde{z}_{j,i})-h(z_{j,i})) \bigg) \bigg]\\
	&= \mathbb{E}_{S_1^m,T_1^m,\sigma_{1,1}}\bigg[\sup\limits_{h \in \mathcal{F}}\bigg( \frac{1}{m n_1}\big(\sigma_{1,1}(h(\tilde{z}_{1,1})-h(z_{1,1}))\big)+\frac{1}{m n_1}\sum_{i=2}^{n_1} (h(\tilde{z}_{1,i})-h(z_{1,i}))\\
	&\ \ \ \ +\sum_{j=2}^{m}\frac{1}{m n_j}\sum_{i=1}^{n_j} (h(\tilde{z}_{j,i})-h(z_{j,i})) \bigg) \bigg]\\
	&\vdots\\
	&= \mathbb{E}_{S_1^m,T_1^m,\sigma}\bigg[\sup\limits_{h \in \mathcal{F}}\bigg(\sum_{j=1}^{m}\frac{1}{m n_j}\sum_{i=1}^{n_j} \sigma_{j,i} (h(\tilde{z}_{j,i})-h(z_{j,i})) \bigg) \bigg]\\
	&\le \frac{1}{m} \sum_{j=1}^{m} \mathbb{E}_{S_1^m,T_1^m,\sigma}\bigg[\sup\limits_{h \in \mathcal{F}}\bigg(\frac{1}{n_j}\sum_{i=1}^{n_j} \sigma_{j,i} (h(\tilde{z}_{j,i})-h(z_{j,i})) \bigg) \bigg]\\
	&= \frac{1}{m} \sum_{j=1}^{m} \mathbb{E}_{S_j,T_j,\sigma}\bigg[\sup\limits_{h \in \mathcal{F}}\bigg(\frac{1}{n_j}\sum_{i=1}^{n_j} \sigma_{j,i} (h(\tilde{z}_{j,i})-h(z_{j,i})) \bigg) \bigg]\\
	&\le \frac{1}{m} \sum_{j=1}^{m} \bigg(\mathbb{E}_{S_j,T_j,\sigma}\bigg[\sup\limits_{h \in \mathcal{F}}\bigg(\frac{1}{n_j}\sum_{i=1}^{n_j} \sigma_{j,i} h(\tilde{z}_{j,i}) \bigg) \bigg] \\
	&\ \ \ \ + \mathbb{E}_{S_j,T_j,\sigma}\bigg[\sup\limits_{h \in \mathcal{F}}\bigg(\frac{1}{n_j}\sum_{i=1}^{n_j} -\sigma_{j,i} h(z_{j,i}) \bigg) \bigg] \bigg)\\
	&= \frac{1}{m} \sum_{j=1}^{m} \bigg(\mathbb{E}_{T_j,\sigma}\bigg[\sup\limits_{h \in \mathcal{F}}\bigg(\frac{1}{n_j}\sum_{i=1}^{n_j} \sigma_{j,i} h(\tilde{z}_{j,i}) \bigg) \bigg] \\
	\end{align*}
	\begin{align*}
	&\ \ \ \ + \mathbb{E}_{S_j,\sigma}\bigg[\sup\limits_{h \in \mathcal{F}}\bigg(\frac{1}{n_j}\sum_{i=1}^{n_j} \sigma_{j,i} h(z_{j,i}) \bigg) \bigg] \bigg)\\
	&= \frac{1}{m} \sum_{j=1}^{m} \bigg(\mathbb{E}_{T_j}\bigg[ \hat{\Re}_{T_j}(\mathcal{F}) \bigg] + \mathbb{E}_{S_j}\bigg[\hat{\Re}_{S_j}(\mathcal{F}) \bigg] \bigg)\\
	&= \frac{2}{m} \sum_{j=1}^{m} \Re_{n_j}(\mathcal{F})
	\end{align*}
	Similarly, we have:
	
	$\mathbb{E}_{S_1^m}[\varphi'(S)]\le \frac{2}{m} \sum_{j=1}^{m} \Re_{n_j}(\mathcal{F})$.
\end{proof}

\subsection{Proof of Theorem 2}
\label{sec:th1_proof}
\begin{proof}
	By the union bound and Lemma~\ref{lemma1}, we have:
	
	\begin{align*}
	&\ \ \ \ \ \mathbb{P}\bigg[\varphi(S)-\mathbb{E}_{S_1^m}[\varphi(S)]\ge\epsilon\ or\ \varphi'(S)-\mathbb{E}_{S_1^m}[\varphi'(S)]\ge\epsilon\bigg]\\
	&\le\mathbb{P}\bigg[\varphi(S)-\mathbb{E}_{S_1^m}[\varphi(S)]\ge\epsilon\bigg]+\mathbb{P}\bigg[\varphi'(S)-\mathbb{E}_{S_1^m}[\varphi'(S)]\ge\epsilon\bigg]\\
	&\le 2e^{\frac{-2 m^2 \epsilon^2}{\sum_{j=1}^{m} \frac{1}{n_j}}}
	\end{align*}
	
	Setting $2e^{\frac{-2 m^2 \epsilon^2}{\sum_{j=1}^{m} \frac{1}{n_j}}}=\delta$, we get $\epsilon= \frac{1}{m} \sqrt{\frac{\log{\frac{2}{\delta}} \sum_{j=1}^{m}\frac{1}{n_j}}{2}}$. Thus:
	
	\begin{align*}
	&\ \ \ \ \  \mathbb{P}\bigg[\max\big(\varphi(S),\varphi'(S)\big)<\max\big(\mathbb{E}_{S_1^m}[\varphi(S)],\mathbb{E}_{S_1^m}[\varphi'(S)]\big)+\epsilon\bigg]\\
	&\ge \mathbb{P}\bigg[\varphi(S)-\mathbb{E}_{S_1^m}[\varphi(S)]<\epsilon\
	and \ \varphi'(S)-\mathbb{E}_{S_1^m}[\varphi'(S)]<\epsilon\bigg]\\
	&= 1-\mathbb{P}\bigg[\varphi(S)-\mathbb{E}_{S_1^m}[\varphi(S)]\ge\epsilon\ 
	or \ \varphi'(S)-\mathbb{E}_{S_1^m}[\varphi'(S)]\ge\epsilon\bigg]\\
	&\ge 1-\delta
	\end{align*}
	
	Notice that:
	\begin{align*}
	(\forall h \in \mathcal{F})\ |\mathbb{E}_{\mathcal{D}_{1}^{m}}[h]-\hat{\mathbb{E}}_{S_1^m}[h]|
	&\le \max\bigg(\sup\limits_{h \in \mathcal{F}}\bigg(\mathbb{E}_{\mathcal{D}_{1}^{m}}[h]-\hat{\mathbb{E}}_{S_1^m}[h]\bigg), \sup\limits_{h \in \mathcal{F}}\bigg(\hat{\mathbb{E}}_{S_1^m}[h]-\mathbb{E}_{\mathcal{D}_{1}^{m}}[h]\bigg)\bigg)\\
	&= \max\big(\varphi(S),\ \varphi'(S)\big)\\
	&\le \max\big(\mathbb{E}_{S_1^m}[\varphi(S)],\ \mathbb{E}_{S_1^m}[\varphi'(S)]\big)+\epsilon
	\end{align*}
	Thus, Lemma~\ref{lemma2} implies:
	
	$(\forall h \in \mathcal{F})\ \  |\mathbb{E}_{\mathcal{D}_{1}^{m}}[h]-\hat{\mathbb{E}}_{S_1^m}[h]|\le \frac{2}{m} \sum_{j=1}^{m} \Re_{n_j}(\mathcal{F})+ \frac{1}{m} \sqrt{\frac{\log{\frac{2}{\delta}} \sum_{j=1}^{m}\frac{1}{n_j}}{2}}$
	
	Therefore:
	
	$d_m(\hat{h},h^*)=\mathbb{E}_{\mathcal{D}_{1}^{m}}[\hat{h}]-\mathbb{E}_{\mathcal{D}_1^m}[h^*]\le \frac{4}{m} \sum_{j=1}^{m} \Re_{n_j}(\mathcal{F})+ \frac{1}{m} \sqrt{2 \log{\frac{2}{\delta}} \sum_{j=1}^{m}\frac{1}{n_j}}$
\end{proof}

\subsection{Proof of Theorem 3}
\label{sec:th2_proof}
\begin{proof}
	By the union bound and Lemma~\ref{lemma1}, we have:
	
	\begin{align*}
	&\mathbb{P}\bigg[\varphi(S)-\mathbb{E}_{S_1^m}[\varphi(S)]\ge\epsilon\ or\ \varphi'(S)-\mathbb{E}_{S_1^m}[\varphi'(S)]\ge\epsilon\ or\ 
	\sum_{j=1}^m p_j \Re_{n_j}(\mathcal{F})-\sum_{j=1}^m p_j \hat{\Re}_{S_j}(\mathcal{F})\ge\epsilon\bigg]\\
	&\le\mathbb{P}\bigg[\varphi(S)-\mathbb{E}_{S_1^m}[\varphi(S)]\ge\epsilon\bigg]+\mathbb{P}\bigg[\varphi'(S)-\mathbb{E}_{S_1^m}[\varphi'(S)]\ge\epsilon\bigg]\\
	&+\mathbb{P}\bigg[\sum_{j=1}^m p_j \Re_{n_j}(\mathcal{F})-\sum_{j=1}^m p_j \hat{\Re}_{S_j}(\mathcal{F})\ge\epsilon\bigg]\\
	&\le 3e^{\frac{-2 m^2 \epsilon^2}{\sum_{j=1}^{m} \frac{1}{n_j}}}
	\end{align*}
	
	Setting $3e^{\frac{-2 m^2 \epsilon^2}{\sum_{j=1}^{m} \frac{1}{n_j}}}=\delta$, we get $\epsilon= \frac{1}{m} \sqrt{\frac{\log{\frac{3}{\delta}} \sum_{j=1}^{m}\frac{1}{n_j}}{2}}$. Thus:
	
	\begin{align*}
	&\ \ \ \ \ \mathbb{P}\bigg[\max\big(\varphi(S),\ \varphi'(S)\big)+2 \sum_{j=1}^{m} p_j \Re_{n_j}(\mathcal{F})<\\
	&\ \ \ \ \ \max\big(\mathbb{E}_{S_1^m}[\varphi(S)],\ \mathbb{E}_{S_1^m}[\varphi'(S)]\big)+2 \sum_{j=1}^m p_j \hat{\Re}_{S_j}(\mathcal{F}) + 3\epsilon\bigg]\\
	&\ge \mathbb{P}\bigg[\varphi(S)-\mathbb{E}_{S_1^m}[\varphi(S)]<\epsilon\ and\ \varphi'(S)-\mathbb{E}_{S_1^m}[\varphi'(S)]<\epsilon\ \\
	&and\ \sum_{j=1}^m p_j \Re_{n_j}(\mathcal{F})-\sum_{j=1}^m p_j \hat{\Re}_{S_j}(\mathcal{F})<\epsilon\bigg]\\
	&= 1-\mathbb{P}\bigg[\varphi(S)-\mathbb{E}_{S_1^m}[\varphi(S)]\ge\epsilon\ or\ \varphi'(S)-\mathbb{E}_{S_1^m}[\varphi'(S)]\ge\epsilon \ \\
	&or\ \sum_{j=1}^m p_j \Re_{n_j}(\mathcal{F})-\sum_{j=1}^m p_j \hat{\Re}_{S_j}(\mathcal{F})\ge\epsilon\bigg]\\
	&\ge 1-\delta
	\end{align*}
	
	Notice that:
	
	\begin{align*}
	(\forall h \in \mathcal{F})\ |\mathbb{E}_{\mathcal{D}_{1}^{m}}[h]-\hat{\mathbb{E}}_{S_1^m}[h]|&\le \max\bigg(\sup\limits_{h \in \mathcal{F}}\bigg(\mathbb{E}_{\mathcal{D}_{1}^{m}}[h]-\hat{\mathbb{E}}_{S_1^m}[h]\bigg),\sup\limits_{h \in \mathcal{F}}\bigg(\hat{\mathbb{E}}_{S_1^m}[h]-\mathbb{E}_{\mathcal{D}_{1}^{m}}[h]\bigg)\bigg)\\
	&= \max\big(\varphi(S),\ \varphi'(S)\big)\\
	&\le \max\big(\mathbb{E}_{S_1^m}[\varphi(S)],\ \mathbb{E}_{S_1^m}[\varphi'(S)]\big)+\epsilon
	\end{align*}
	
	Thus, Lemma~\ref{lemma1} and Lemma~\ref{lemma2} implies:
	
	$(\forall h \in \mathcal{F})\ \  |\mathbb{E}_{\mathcal{D}_{1}^{m}}[h]-\hat{\mathbb{E}}_{S_1^m}[h]|\le \frac{2}{m} \sum_{j=1}^{m} \hat{\Re}_{S_j}(\mathcal{F}) + \frac{3}{m} \sqrt{\frac{\log{\frac{3}{\delta}} \sum_{j=1}^{m}\frac{1}{n_j}}{2}}$
	
	Therefore:
	
	$d_m(\hat{h},h^*)=\mathbb{E}_{\mathcal{D}_{1}^{m}}[\hat{h}]-\mathbb{E}_{\mathcal{D}_1^m}[h^*]\le \frac{4}{m} \sum_{j=1}^{m} p_j \hat{\Re}_{S_j}(\mathcal{F}) + \frac{1}{m} \sqrt{18 \log{\frac{3}{\delta}} \sum_{j=1}^{m}\frac{1}{n_j}}$
\end{proof}

\subsection{Proof of Theorem 4}
\label{sec:th3_proof}
\begin{proof}
	By Theorem~\ref{th_expected_R} and the Cauchy-Schwarz inequality, we have:
	
	\begin{align}
	\label{d_m_cauchy}
	d_m(\hat{h},h^*)&= \frac{4}{m} \sum_{j=1}^{m} \Re_{n_j}(\mathcal{F}) + \frac{1}{m} \sqrt{2 \log{\frac{2}{\delta}} \sum_{j=1}^{m}\frac{1}{n_j}} \nonumber \\
	&\le \frac{1}{m} \bigg( 4\sum_{j=1}^{m} \frac{a_j}{\sqrt{n_j}}  +\sqrt{2\log{\frac{2}{\delta}} \sum_{j=1}^{m}\frac{1}{n_j}} \bigg) \nonumber \\
	&= \frac{1}{m} \bigg( \sum_{j=1}^{m} \sqrt{\frac{16 a_j^2}{n_j}}  +\sqrt{2 \log{\frac{2}{\delta}} \sum_{j=1}^{m}\frac{1}{n_j}} \bigg) \nonumber \\
	&\le \frac{\sqrt{m+1}}{m} \sqrt{\sum_{j=1}^{m} \frac{16 a_j^2+2\log{\frac{2}{\delta}}}{n_j}}
	\end{align}
	
	Now the question is how to set $n_j$ in order to minimize the bound obtained in \eqref{d_m_cauchy}.
	
	Define $\gamma_j= 16 a_j^2+2\log{\frac{2}{\delta}}$. We can define the following optimization problem:
	
	$\min{\sum_{j=1}^{m} \frac{\gamma_j}{n_j}}$, s.t. $\sum_{j=1}^{m} c_j n_j \le C,\ n_j> 0.$
	
	The dual problem is:
	
	$\max{2\sum_{j=1}^{m} \sqrt{\gamma_j (c_j \lambda_0 - \lambda_j) } - C \lambda_0}$, s.t. $\lambda_j\ge0, \lambda_0>\max\limits_j{\frac{\lambda_j}{c_j}}.$
	
	Strong duality holds due to the linearity of the constraints in the primal problem and Slater's condition (\cite{boyd04convex}).
	
	It is easy to see that due to complementary slackness, all $\lambda_j=0$, and $\lambda_0$ can be solved by taking the derivative of the dual objective equal to zero.
	
	Therefore the dual problem reaches maximum when $\nu=\frac{(\sum_{j=1}^{m} \sqrt{\gamma_j c_j})^2}{C^2}, \lambda_j=0, j=1,\dots,m$.
	
	Thus, if
	
	\begin{equation}
	\label{n_j_split}
	n_j=C \frac{\sqrt{\gamma_j}}{\sqrt{c_j} \sum_{k=1}^{m} \sqrt{\gamma_k c_k}}
	\end{equation}
	
	the primal problem reaches its minimum. By replacing $n_j$ with \eqref{n_j_split} into either Theorem~\ref{th_expected_R} or \eqref{d_m_cauchy}, we have:
	
	$d_m(\hat{h},h^*)\le \frac{\sqrt{\sum_{j=1}^m \sqrt{16 a_j^2 c_j + 2 c_j \log{\frac{2}{\delta}}}}}{m\sqrt{C}}$ 
	
	$\bigg[ \sum_{j=1}^m \frac{4 a_j \sqrt[4]{c_j}}{\sqrt[4]{16 a_j^2 + 2 \log{\frac{2}{\delta}}}} + \sqrt{2 \log{\frac{2}{\delta}} \sum_{j=1}^m \frac{\sqrt{c_j}}{\sqrt{16 a_j^2 + 2 \log{\frac{2}{\delta}}}}} \bigg]$
	
	$\le \frac{\sqrt{m+1}}{m\sqrt{C}}\sum_{j=1}^m \sqrt{16 a_j^2 c_j +2 c_j \log{\frac{2}{\delta}}}$.
	
\end{proof}

\subsection{Proofs of Example Statements}
\subsubsection{Two-Layer Neural Networks}
\label{sec:neural_networks}
\begin{proof}
	Recall that the empirical Gaussian complexity of $\mathcal{G}$ with respect to the dataset $S_j$ of $n_j$ samples is defined as:
	
	\begin{equation*}
	\hat{G}_{S_j}(\mathcal{G})=\mathbb{E}_{g}\bigg[\sup\limits_{h \in \mathcal{G}} \bigg(\frac{1}{n_j}\sum_{i=1}^{n_j}g_i z_{j,i}\bigg)\bigg]
	\end{equation*}
	
	where $g=\{g_1, \dots g_{n_j}\}$ are $n_j$ independent Gaussian $N(0,1)$ random variables. The Gaussian complexity of $\mathcal{G}$ for $n_j$ samples is defined as:
	
	\begin{equation*}
	G_{n_j}(\mathcal{G})=\mathbb{E}_{S_j\sim\mathcal{D}_j^{n_j}}[\hat{G}_{S_j}(\mathcal{G})]
	\end{equation*}
	
	From Theorem 18 in \cite{bartlett02}, the empirical Gaussian complexity of a two-layer neural network can be bounded by:
	
	\begin{align*}
	\hat{G}_{S_j}(\mathcal{G}) &\le \frac{b}{n_j} (\log{l})^{\frac{1}{2}} \max_{k,k'} \big( \sum_{i=1}^{n_j} (x_{j,i}^{(k)}-x_{j,i}^{(k')})^2 \big)^{\frac{1}{2}} \\
	&\le \frac{2b}{n_j} (\log{l})^{\frac{1}{2}} \sqrt{n_j} X_{\infty,j} \\
	&= \frac{2b (\log{l})^{\frac{1}{2}} X_{\infty,j}}{\sqrt{n_j}} 
	\end{align*}
	
	where $x_{j,i}=\{x_{j,i}^{(1)}, \dots x_{j,i}^{(l)}\}$ and $b>0$ is an absolute constant.
	
	It is obvious that
	
	\begin{equation*}
	G_{n_j}(\mathcal{G})=\mathbb{E}_{S_j\sim\mathcal{D}_j^{n_j}}[\hat{G}_{S_j}(\mathcal{G})]\le \frac{2b (\log{l})^{\frac{1}{2}} X_{\infty,j}}{\sqrt{n_j}} 
	\end{equation*}
	
	From Lemma 4 in \cite{bartlett02}, we have for an absolute constant $b'>0$:
	
	\begin{equation}
	\label{lemma_bartlett}
	\Re_{n_j}(\mathcal{G}) \le b' G_{n_j}(\mathcal{G})
	\end{equation}
	
	By \eqref{LT_exp} and \eqref{lemma_bartlett} we have:
	
	\begin{equation*}
	\Re_{n_j}(\mathcal{F})\le\frac{B X_{\infty,j} \sqrt{\log{l}} }{\sqrt{n_j}}
	\end{equation*}
	
	with $B>0$ being an absolute constant.
\end{proof}

\subsubsection{Kernel Methods}
\label{sec:kernel_methods}
\begin{proof}
	From Lemma 22 in \cite{bartlett02} and Jensen's inequality, we have:
	
	\begin{align}
	\label{kernel_methods}
	\Re_{n_j}(\mathcal{G})&=\mathbb{E}_{S_j\sim\mathcal{D}_j^{n_j}}[\hat{\Re}_{S_j}(\mathcal{G})]\nonumber\\
	&\le \mathbb{E}_{S_j\sim\mathcal{D}_j^{n_j}}\Bigg[\frac{2 B_j}{n_j} \sqrt{\sum_{i=1}^{n_j} k(x_{j,i}, x_{j,i})}\Bigg]\nonumber\\
	&\le \frac{2 B_j \sqrt{\mathbb{E}_{x_j}[k(x_j, x_j)]}}{\sqrt{n_j}}
	\end{align}
	
	By \eqref{LT_exp} and \eqref{kernel_methods} we have:
	
	\begin{equation*}
	\Re_{n_j}(\mathcal{F})\le\frac{2 B_j \sqrt{\mathbb{E}_{x_j}[k(x_j, x_j)]}}{\sqrt{n_j}}
	\end{equation*}
\end{proof}

\subsection{Experiment Details}
\label{sec:experiment_details}
Due to the independence between $\zeta_{j}$ and $\epsilon$, as well as the fact that $\mathbb{E}[{\zeta_j^{(k)}}^2]=var(\zeta_j^{(k)})=\frac{4 X_{2,j}^2}{3l^3}$:
\begin{align*}
d_m (\hat{h}, h^*)\nonumber &= \mathbb{E}_{\mathcal{D}_{1}^{m}}[\hat{h}]-\mathbb{E}_{\mathcal{D}_{1}^{m}}[h^*]\\
&= \sum_{j=1}^m \frac{1}{2m} (\mathbb{E}[(\hat{w}^T x_j-y_j)^2]-\mathbb{E}[(w^{*T} x_j-y_j)^2])\\
&= \sum_{j=1}^m \frac{1}{2m} (\mathbb{E}[(\hat{w}^Tx_j-w^{*T}x_j-\epsilon)^2]-\mathbb{E}[\epsilon^2])\\
&= \sum_{j=1}^m \frac{1}{2m} (\mathbb{E}[(\hat{w}^Tx_j-w^{*T}x_j)^2]-2(\hat{w}-w^*)^TA_j\mathbb{E}[\zeta_j\epsilon])\\
&= \sum_{j=1}^m \frac{1}{2m} \sum_{k,k'} \mathbb{E}[((\hat{w}-w^*)^T A_j)^{(k)} \zeta_j^{(k)} ((\hat{w}-w^*)^T A_j)^{(k')} \zeta_j^{(k')}]\\
&= \sum_{j=1}^m \frac{1}{2m} \sum_{k=1}^l \mathbb{E}[{((\hat{w}-w^*)^T A_j)^{(k)}}^2 {\zeta_j^{(k)}}^2]\\
&= \frac{2}{3ml^3} \sum_{j=1}^m X_{2,j}^2 \Vert (\hat{w}-w^*)^T A_j \Vert_2^2
\end{align*}

\end{document}